\documentclass[11pt]{article}

\usepackage[margin=1in]{geometry}
\usepackage{times}
\usepackage{amsmath,amssymb,amsthm,booktabs,multirow,makecell}
\usepackage{algorithm}
\usepackage[noend]{algpseudocode}
\usepackage{graphicx}
\usepackage{float}
\usepackage{placeins}
\usepackage{dblfloatfix}
\usepackage{hyperref}
\usepackage{url}
\usepackage[table]{xcolor}
\usepackage{natbib}
\hypersetup{hidelinks}

\usepackage{amsmath,amsfonts,bm}

\def\eqref#1{equation~\ref{#1}}

\def\1{\bm{1}}

\DeclareMathAlphabet{\mathsfit}{\encodingdefault}{\sfdefault}{m}{sl}
\SetMathAlphabet{\mathsfit}{bold}{\encodingdefault}{\sfdefault}{bx}{n}

\newcommand{\E}{\mathbb{E}}

\newcommand{\method}{ToPO}
\newcommand{\LossTopo}{\mathcal{L}_{\mathrm{ToPO}}}
\newcommand{\LossTri}{\mathcal{L}_{\mathrm{tri}}}

\newcommand{\norm}[1]{\left\lVert #1 \right\rVert}
\newcommand{\inner}[2]{\left\langle #1,#2 \right\rangle}

\definecolor{TopRankMint}{HTML}{D1F5E6}
\definecolor{RunnerUpMist}{HTML}{EAF0F7}
\newcommand{\best}[1]{\cellcolor{TopRankMint}\textbf{#1}}
\newcommand{\runnerup}[1]{\cellcolor{RunnerUpMist}\textit{#1}}

\newtheorem{theorem}{Theorem}

\newtheorem{lemma}{Lemma}

\newtheorem{proposition}[theorem]{Proposition}

\newcommand{\arxivTitle}{ToPO: Token-Conditioned Preference Routing for Attention-Based Latent Diffusion Models}

\newcommand{\arxivAuthors}{%
  Juntao Xu$^{1,*}$ \quad
  Shihong Li$^{2,*}$ \quad
  Hoi Fan Au$^{1}$ \quad
  Ning Zhu$^{3}$\\[2pt]
  \small $^{1}$Tsinghua University \quad
  $^{2}$University of Electronic Science and Technology of China \quad
  $^{3}$Stanford University\\[-1pt]
  \small $^{*}$Equal contribution.%
}

\title{\arxivTitle}
\author{\arxivAuthors}
\date{}

\begin{document}
\maketitle

\vspace{-0.14in}
\begin{figure}[H]
\centering
\includegraphics[width=.85\textwidth]{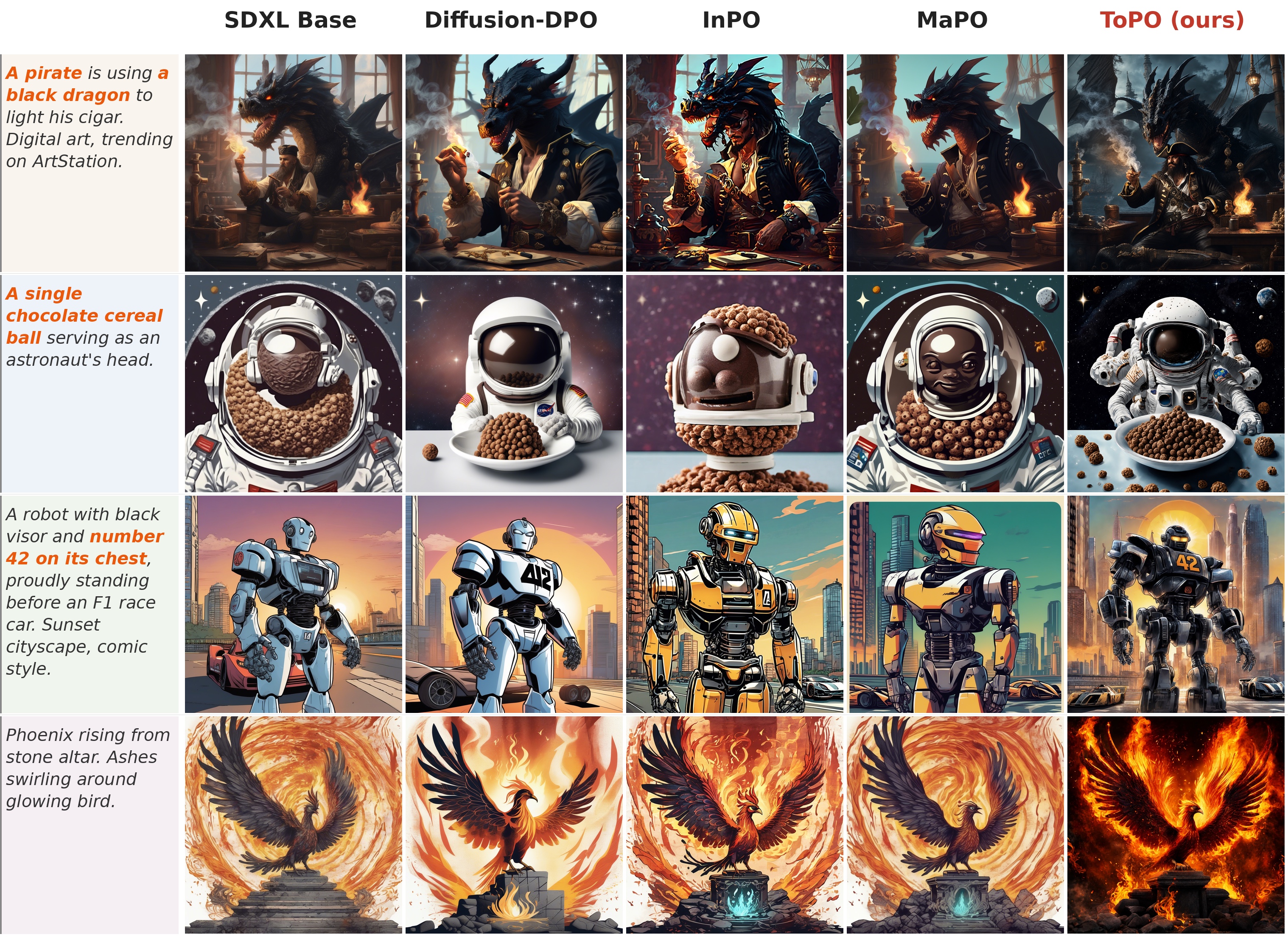}
\caption{Illustrative SDXL examples on four fixed prompts, spanning rich visual-detail and compositional requests.  Bold prompt terms identify the target constraints in each case.  ToPO is shown beside SDXL Base, Diffusion-DPO, InPO, and MaPO; quantitative comparisons are reported separately in Section~\ref{sec:experiments}.}
\label{fig:qualitative-cover}
\end{figure}

\begin{abstract}
Pairwise preference labels rank complete images, yet Diffusion-DPO applies their effect over many spatial and denoising-time coordinates.  For attention-based, noise-prediction latent diffusion, \method{} (Token-Oriented Preference Optimization) constructs a per-minibatch, detached, separable spatial--temporal route from branchwise squared-residual contrast in a frozen reference denoiser.  Preferred-branch cross-attention uses content tokens to modulate the spatial factor, and an auxiliary pixel-midpoint ordering term is added without local labels or a learned reward model.  In matched three-seed retrainings with a shared update schedule, \method{} has higher endpoint estimates than Diffusion-DPO on all five reported SD-1.5 metrics and on HPSv2, ImageReward, and CLIP for SDXL.  It also receives larger raw win shares in an aggregate blind SDXL A/B study.  These findings are scoped to the reported equal-update U-Net protocols rather than an equal-compute comparison.
\end{abstract}

\section{Introduction}

Preference optimization directly adapts diffusion models to comparisons between generated images \citep{rafailov2023dpo,wallace2024diffusiondpo,xu2024reliabilityassessmentinformationsources}.  A pairwise label ranks whole images, whereas diffusion training resolves through many spatial--temporal noise-prediction coordinates.  Recent work addresses complementary limitations by improving the score-preference objective \citep{zhu2025dspo}, handling conflicting or noisy pairs \citep{li2026alphadpo,liu2026semidpo}, constructing instance-level supervision \citep{huang2025patchdpo,sun2026iapo}, or learning trajectory rewards \citep{liang2025spo,zhang2025lpo,xian2026tapo}.  We study a narrower allocation problem: with a fixed offline winner--loser pair, how can its image-level preference signal induce a sample-dependent local update route without constructing external local labels or learning a separate reward model?

The allocation is under-specified: the pair fixes the image-level preference orientation but not how an update should be distributed over local prediction coordinates.  We refer to this as the route-allocation problem.  Spatial position $s$ and denoising time $\tau$ are native noise-prediction indices; a prompt token $k$ becomes actionable only after cross-attention maps its influence back to the spatial--temporal field \citep{rombach2022ldm}.  \method{} is therefore token-oriented rather than token-only: tokens condition a route over diffusion-native coordinates.

\method{} implements this view with a branchwise residual-contrast route.  At shared-noise denoising anchors, the route contrasts local squared residual magnitudes of the preferred and dispreferred branches.  This observable contrast supplies the spatial and temporal factors, while the signed DPO logit retains the winner--loser orientation.  Preferred-branch cross-attention reads the spatial factor into a token-conditioned modulation weight $w_k$ and folds it back into a separable spatial--temporal field $W(s,\tau)$.  A content-token prior excludes padding, SOS, and EOS positions before normalization; a pixel-midpoint triplet term supplies an auxiliary symmetric ordering constraint rather than a source of token-level supervision.

Figure~\ref{fig:qualitative-cover} qualitatively compares \method{} with the base model and three preference-optimization baselines on four fixed SDXL prompts.  Bold prompt terms mark the target constraints; Section~\ref{sec:experiments} quantifies the corresponding outcomes under fixed and compositional protocols.

Our main contributions are threefold:
\vspace{-3pt}
\begin{itemize}
    \setlength{\itemsep}{1pt}
    \setlength{\topsep}{1pt}
    \setlength{\parsep}{0pt}
    \item We formulate route allocation for pair-only offline diffusion DPO: an image-level winner--loser comparison fixes a preference orientation but does not identify a sample-dependent distribution over local denoising coordinates.
    \item We introduce \method{}, a detached token-conditioned, separable spatial--temporal reweighting of Diffusion-DPO.  A frozen reference denoiser derives a residual-contrast route; preferred-branch cross-attention supplies content-token modulation of its spatial factor, and a routed pixel-midpoint term adds an auxiliary symmetric ordering constraint.  The appendix analyzes the declared detached update and frozen diagnostic under stated assumptions.
    \item We find that the locked route has positive three-seed ToPO--Diffusion-DPO endpoint differences on all five SD-1.5 metrics and on SDXL HPSv2, ImageReward, and CLIP under equal-update, shared-hyperparameter U-Net protocols.  Descriptive checkpoint, compositional, human-preference, route-permutation, and held-out evaluations map the empirical profile of the complete objective within their respective protocols.
\end{itemize}

\section{Related Work}
\label{sec:related}

\paragraph{Preference alignment for diffusion models.}
DPO casts pairwise preference learning without a separate reward-model-and-RL stage \citep{rafailov2023dpo,su2025higher}.  Diffusion alignment uses reward-driven trajectory training \citep{black2024ddpo,prabhudesai2023alignprop} or direct feedback objectives, including D3PO, Diffusion-DPO, Diffusion-KTO, and KTO \citep{yang2024d3po,wallace2024diffusiondpo,li2024diffusionkto,ethayarajh2024kto}.  DSPO changes the preference objective to align score matching and preference learning \citep{zhu2025dspo}; $\alpha$-DPO and Semi-DPO instead address robustness to noisy or conflicting preference pairs \citep{li2026alphadpo,liu2026semidpo}.  Curriculum-DPO orders reward-ranked pairs by difficulty \citep{croitoru2025curriculum}; \citet{kang2026rethinking} changes reference dynamics and timestep-aware optimization.  \method{} instead retains the supplied offline pair and asks how its existing comparison can be reweighted across local denoising coordinates \citep{xu2025evaluatingevidentialreliability}.

\paragraph{Fine-grained spatial and instance alignment.}
PatchDPO estimates local patch quality for finetuning-free personalized generation \citep{huang2025patchdpo}.  IAPO advances image-level preference data to automatically constructed instance-level preferences, using vision-language and detector-derived instance masks \citep{sun2026iapo}.  Visual Preference Policy Optimization lifts scalar visual rewards to structured pixel-level advantages with pretrained vision backbones \citep{ni2026vipo}, whereas \emph{ViPO: Visual Preference Optimization at Scale} introduces large-scale data and Poly-DPO, a confidence-aware sample-level modification of offline DPO \citep{li2026viposcale}.  BiDPO constructs the BiComp preference data and adds region-level guidance \citep{liu2026bidpo}; D-Fusion changes pair construction through mask-guided self-attention fusion \citep{hu2025dfusion}.  OSPO is a particularly close attention-based alternative: it constructs object-centric self-improvement data and combines attention masks with an object-weighted SimPO loss \citep{oh2026ospo}.  In contrast, \method{} retains the supplied winner--loser pair and uses a frozen denoiser's residual contrast and cross-attention to construct a detached route, without detector, box, patch-quality, or learned local-reward supervision.  The methods differ in data construction, supervision, initialization, and endpoint protocol, so Table~\ref{tab:closest-work} compares interfaces rather than reported scores.

\paragraph{Temporal and latent preference optimization.}
Trajectory-aware objectives modify temporal weighting, DDIM inversion, step-wise reward estimation, or reference handling \citep{yang2024densereward,lu2025inpo,li2025inversiondpo,wu2025rdpo,hong2024mapo,kang2026rethinking,zhu2026rectifydiffusedisentanglingconcepts}.  SPO learns step-aware preference supervision \citep{liang2025spo}, LPO learns a noise-aware latent reward model \citep{zhang2025lpo}, and SLRM/TAPO learns noise-compatible latent rewards for trajectory-level advantages \citep{xian2026tapo}.  \method{} instead remains an offline pairwise DPO objective: its separable time factor is inferred from the supplied pair's frozen-reference residual contrast and is coupled to a token-conditioned spatial factor, rather than to a learned reward model, moving reference, or re-sampled step-level labels.

\paragraph{Attention-guided routing and control.}
Latent diffusion uses cross-attention for text conditioning \citep{rombach2022ldm}; Prompt-to-Prompt, Attend-and-Excite, and Freestyle manipulate attention at inference \citep{hertz2023prompttoprompt,chefer2023attendandexcite,xue2023freestyle}.  Concurrent STAR studies adaptive spatiotemporal reward allocation for text-to-image RL post-training \citep{shen2026star}.  \method{} instead uses preferred-branch attention as a conditional read--write bridge inside an offline DPO update: it reads a residual-contrast map into content-token weights and writes the resulting modulation back to diffusion coordinates.  Its distinguishing interface is thus a frozen-reference, pair-only route without external local supervision, learned latent rewards, online reward optimization, or a changed preference-data construction.  Appendix~\ref{app:closest-work} expands this comparison.

\section{Preliminaries: Diffusion Preference Learning}
\label{sec:prelim}

\subsection{Conditional latent diffusion}

Let $x$ be an image, $z_0=\mathcal E(x)$ its latent encoding, and $c$ a text prompt.  For noise $\varepsilon\sim\mathcal N(0,I)$, the standard DDPM forward process \citep{ho2020ddpm} produces
\begin{equation}
 z_\tau=\sqrt{\bar\alpha_\tau}z_0+\sqrt{1-\bar\alpha_\tau}\,\varepsilon,
 \qquad \bar\alpha_\tau=\prod_{j\leq\tau}(1-\beta_j).
 \label{eq:forward-diffusion}
\end{equation}
The denoiser $\varepsilon_\theta(z_\tau,\tau,c)$ predicts $\varepsilon$.  Its native prediction coordinates are a latent spatial position $s$ and a denoising time $\tau$; deterministic DDIM trajectories are one common reverse-time construction \citep{song2021ddim}.  We write the corresponding local residual as
\begin{equation}
 d_\theta(x;c,s,\tau)=
 \norm{\varepsilon_\theta(z_\tau,\tau,c)[s]-\varepsilon[s]}_2^2.
 \label{eq:local-residual}
\end{equation}
Thus, spatial locations and timesteps are intrinsic coordinates of diffusion training.  A prompt token is different: it affects the denoiser through cross-attention rather than indexing a noise-prediction residual on its own.

\subsection{Human preferences and Diffusion-DPO}

The preference data consist of triples $Z=(c,x^+,x^-)$, where $x^+$ is preferred to $x^-$ for prompt $c$.  We use the standard Bradley--Terry interpretation
\begin{equation}
 \Pr(x^+\succ x^-\mid c)=
 \sigma\!\left(r^*(c,x^+)-r^*(c,x^-)\right),
 \label{eq:bradley-terry}
\end{equation}
with an unobserved human reward $r^*$.  Direct preference optimization eliminates an explicit reward model by comparing a learned policy score to a reference score \citep{rafailov2023dpo}.  In diffusion models, Diffusion-DPO instantiates this comparison with a tractable surrogate for the image-level diffusion likelihood \citep{wallace2024diffusiondpo}.  Abstractly, if $S_\theta(c,x)$ denotes such a surrogate and $\Delta_\theta(Z)=S_\theta(c,x^+)-S_\theta(c,x^-)$, the pairwise objective has the form
\begin{equation}
 \mathcal L_{\mathrm{D\text{-}DPO}}
 =-\E_Z\log\sigma\!\left(
 \beta\,[\Delta_\theta(Z)-\Delta_{\mathrm{ref}}(Z)]\right).
 \label{eq:diffusion-dpo-prelim}
\end{equation}
The trajectory approximation differs across diffusion preference objectives, but the relevant common feature here is unchanged: each preference pair enters the loss through one image-level scalar comparison.

\subsection{From scalar preference to local credit}

The diffusion preference objective in \eqref{eq:diffusion-dpo-prelim} ranks two images, whereas its denoising surrogate is assembled from many local residuals in \eqref{eq:local-residual}.  This creates a route-allocation question: the objective supplies a global comparison but does not prescribe a sample-specific allocation over $(s,\tau)$.  Prompt token $k$ affects a local prediction through cross-attention rather than serving as another residual index.  \method{} consequently uses the declared preferred-branch attention map $A_k^+(s,\tau)$ as a conditional bridge from token modulation back to a spatial--temporal field.

The route-factor diagnostic makes this transition empirical.  For every pair and candidate factor, we normalize its allocation and measure its KL divergence from uniform on that factor's declared support.  Across 1,716 preference pairs, token, timestep, and spatial allocations each depart from uniform.  Frequency is shown as a non-route comparison rather than selected or rejected by its raw KL magnitude.  Because raw KL also depends on support cardinality, Figure~\ref{fig:g0-credit} is a within-factor non-uniformity diagnostic rather than a cross-factor ranking.  It motivates a token-conditioned, separable spatial--temporal construction---native diffusion coordinates $(s,\tau)$ plus a cross-attention token modulation---without claiming that non-uniformity establishes semantic correctness.  Section~\ref{sec:method} turns this construction into a routed preference objective.

\begin{figure}[H]
\centering
\includegraphics[width=.98\textwidth]{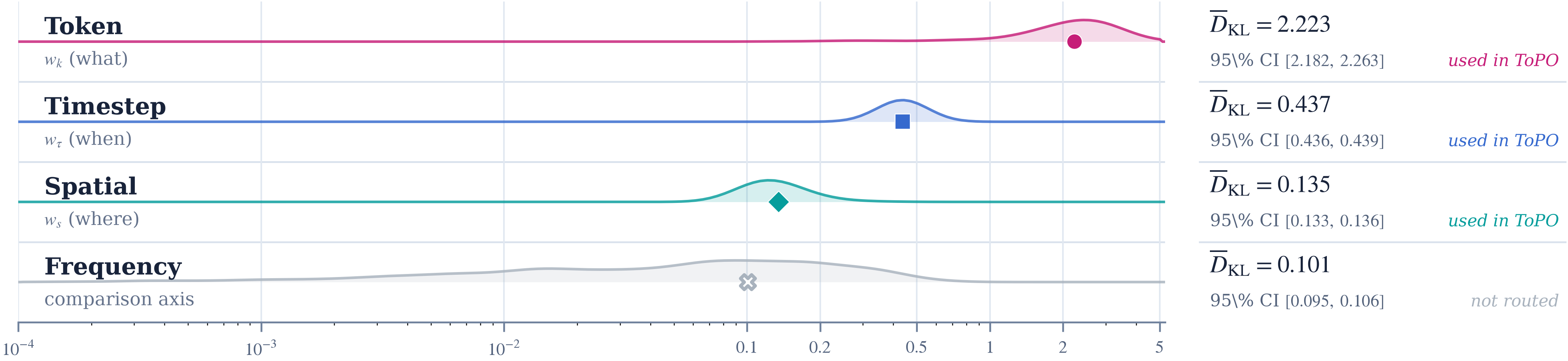}
\caption{\textbf{Route-factor diagnostic over 1,716 preference pairs.}  Curves show pair-level KL divergence from a uniform allocation on each factor's declared support, on a logarithmic scale; markers and the sidebar give bootstrap means and 95\% intervals.  The raw KL values diagnose non-uniformity within a factor and are not used to rank factors with different support sizes.  Token, timestep, and spatial allocations depart from uniform, while frequency is retained only as a comparison.}
\label{fig:g0-credit}
\end{figure}

\section{ToPO: Token-Oriented Spatial--Temporal Preference Routing}
\label{sec:method}

\subsection{Overview}

The preliminary analysis identifies non-uniform candidate statistics that can define a route over native diffusion coordinates rather than assign a loss directly to prompt tokens.  \method{} uses a branchwise residual-contrast route to construct a spatial factor $w_s(s)$, an anchor-time factor $w_\tau(\tau)$, and a token-conditioned modulation weight $w_k(k)$ as an attention-weighted content-token readout of the spatial factor.  Preferred-branch cross-attention folds this modulation back into one separable spatial--temporal map $W(s,\tau)$.  Thus, tokens modulate a diffusion-native route; they are not a separate loss coordinate.  The frozen reference constructs the route, whose spatial factor also weights the midpoint regularizer.  Figure~\ref{fig:topo-pipeline} summarizes the training-time computation.

\begin{figure*}[t]
\centering
\includegraphics[width=.94\textwidth,height=.33\textheight,keepaspectratio]{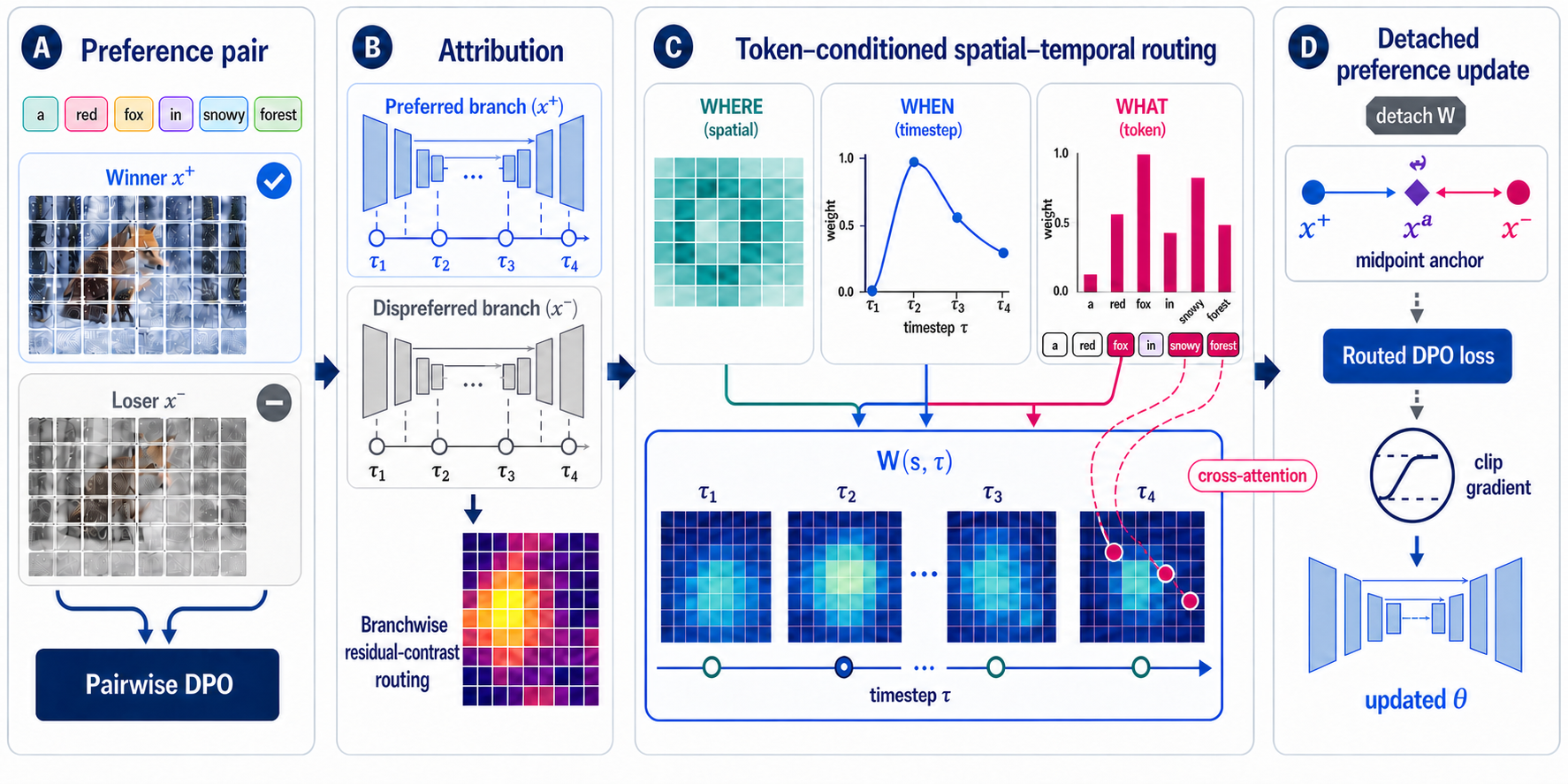}
\caption{\textbf{ToPO routing pipeline (schematic).}  A frozen reference contrasts preferred and dispreferred local squared residuals at four full-precision anchors.  Spatial $w_s(s)$ and timestep $w_\tau(\tau)$ define a separable route; the cross-attention-projected token-conditioned modulation weight $w_k(k)$ adjusts its spatial factor, as shown by the dashed magenta token-to-site paths.  The detached field reweights Diffusion-DPO, while $\widetilde W_s$ weights the auxiliary pixel-midpoint ordering term before the clipped AdamW update.  Pair labels supply the DPO sign; routing supplies coordinate allocation.}
\label{fig:topo-pipeline}
\end{figure*}

\subsection{Token-conditioned separable spatial--temporal route construction}

For an anchor $\tau_i\in\{50,349,649,949\}$, the residual-contrast pass noises both endpoints with the same draw $\varepsilon_i$ and evaluates the frozen-reference predictions $\widehat\varepsilon_{\mathrm{ref},i}^{\pm}=\varepsilon_{\mathrm{ref}}(z_{\tau_i}^{\pm},\tau_i,c)$.  For channel $q$ and latent site $s$, its branchwise squared-residual contrast is
\begin{equation}
 R_i^{\pm}(q,s)=\omega_i\bigl(\varepsilon_i(q,s)-\widehat\varepsilon_{\mathrm{ref},i}^{\pm}(q,s)\bigr)^2,
 \qquad
 D_i(s)=\frac{1}{C}\sum_{q=1}^{C}\left|R_i^+(q,s)-R_i^-(q,s)\right| .
 \label{eq:residual-contrast}
\end{equation}
Here $\omega_i$ is the Diffusion-DPO integrand weight.  $D_i(s)$ is a symmetric observable residual-contrast magnitude: it is high where the frozen reference exhibits different local residual magnitudes for the two image branches.  We use this label-invariant quantity to assign \emph{allocation mass}, while the signed winner--loser term $\delta^+-\delta^-$ in \eqref{eq:topo-loss} specifies preference direction.  For example, a site with residual magnitudes $(.1,.4)$ receives contrast $.3$ under either ordering of the branch labels.  Thus the route emphasizes coordinates with larger branchwise residual-magnitude differences; it is not a standalone local preference label.  Let $\operatorname{Norm}_1$ normalize a nonnegative vector over its declared support to sum to one.  The residual-contrast pass forms the data-dependent spatial and anchor-time marginals
\begin{equation}
 d_s(s)=\frac1{|\mathcal A|}\sum_{i\in\mathcal A}D_i(s),\qquad
 d_\tau(\tau_i)=\frac1{|\mathcal S|}\sum_{s\in\mathcal S}D_i(s),
 \label{eq:credit-marginals}
\end{equation}
then mixes each normalized marginal with its declared structural prior:
\begin{equation}
 w_s=\operatorname{Norm}_1\!\left[\lambda\operatorname{Norm}_1[d_s]+(1-\lambda)p_s\right],\qquad
 \bar w_\tau(\tau_i)=\operatorname{Norm}_1\!\left[\lambda\operatorname{Norm}_1[d_\tau]+(1-\lambda)p_\tau\right].
 \label{eq:spatial-temporal-credit}
\end{equation}
The locked configuration uses $\lambda=.5$, a uniform spatial prior, and an SNR-flat anchor-time prior.  During training, the anchor-time distribution is evaluated at an arbitrary sampled timestep through normalized Gaussian interpolation with $\sigma=100$:
\begin{equation}
 w_\tau(\tau)=|\mathcal A|\sum_{i\in\mathcal A}
 \kappa_\sigma(\tau-\tau_i)\,\bar w_\tau(\tau_i),
 \qquad \kappa_\sigma(\tau-\tau_i)=
 \frac{\exp(-\lvert\tau-\tau_i\rvert^2/(2\sigma^2))}
 {\sum_{j\in\mathcal A}\exp(-\lvert\tau-\tau_j\rvert^2/(2\sigma^2))}.
 \label{eq:timestep-credit}
\end{equation}

At every preferred-branch residual-contrast forward, we capture the attention probabilities of all U-Net modules named \texttt{attn2}.  We average query--key attention probabilities over heads, average each captured layer over the four anchors, average the resized layer maps, and bilinearly resize the result to the native latent grid ($64\times64$ for SD-1.5 and $128\times128$ for SDXL); denote the resulting map by $\bar A_k^+(s)$.  We use the preferred branch as the declared coordinate-to-token readout: it conditions the spatial route on the image that defines the desired direction, while the signed DPO logit in \eqref{eq:topo-loss} alone carries the winner--loser orientation.  This choice is neither a target annotation nor a claim that preferred-only attention is the unique valid bridge; loser-only and symmetric bridges are distinct route variants.  The token pathway reads the pre-prior spatial evidence $w_s^{\mathrm{data}}=\operatorname{Norm}_1[d_s]$ through this map:
\begin{equation}
  w_k^{\mathrm{data}}(k)\ \propto
  \mathbf 1_{\mathrm{content}}(k)\sum_s \bar A_k^+(s) w_s^{\mathrm{data}}(s),
  \qquad
  w_k=\operatorname{Norm}_1\!\left[\lambda\operatorname{Norm}_1[w_k^{\mathrm{data}}]+(1-\lambda)p_k\right].
  \label{eq:token-modulation}
\end{equation}
The content-token indicator removes padding, SOS, and the valid EOS position before normalization.  This defines $w_k$ as an attention-weighted token-conditioned modulation statistic of branchwise residual disagreement while retaining a content-token prior.

Cross-attention folds the factors into the field used by the objective:
\begin{equation}
 \begin{aligned}
 W_s(s)&=\operatorname{Norm}_{\mathrm{mean}}\!\left(
 w_s(s)\left[\sum_k \bar A_k^+(s)w_k(k)\right]\right),\\[-2pt]
 W(s,\tau)&=W_s(s)w_\tau(\tau),
 \qquad (\widetilde W_s,\widetilde w_\tau)=\operatorname{sg}[W_s,w_\tau],\\[-2pt]
 \widetilde W(s,\tau)&=\widetilde W_s(s)\widetilde w_\tau(\tau).
 \end{aligned}
 \label{eq:joint-weight}
\end{equation}
Here $w_s\in\mathbb R^{|\mathcal S|}$, $w_\tau\in\mathbb R^{|\mathcal T|}$, and $w_k\in\mathbb R^{|\mathcal K|}$, whereas $W\in\mathbb R^{|\mathcal S|\times|\mathcal T|}$.  Equation~\ref{eq:joint-weight} is deliberately separable over the two native loss coordinates: $w_k$ modulates $W_s$, and does not introduce a three-dimensional $(s,\tau,k)$ loss tensor.  At every update $t$, the frozen reference computes $W_t=\mathcal W_{\mathrm{ref}}(\mathcal B_t)$ from the current preference minibatch.  The policy then uses $\widetilde W_t=\operatorname{sg}[W_t]$ on that same update: the route is differentiated through neither the attributor nor the reference model.  This per-minibatch construction is shared by the SD-1.5 and SDXL implementations.

The normalization convention is part of the construction rather than a cosmetic rescaling.  For a nonnegative map $u$ on its declared support $\mathcal J$, we use
\begin{equation}
 \operatorname{Norm}_{\mathrm{mean}}[u]_j=
 \frac{u_j}{\max\{\lvert\mathcal J\rvert^{-1}\sum_{j'\in\mathcal J}u_{j'},\,10^{-8}\}}.
 \label{eq:mean-normalization}
\end{equation}
Consequently, multiplying the factors changes their arrangement across coordinates without allowing a vanishing or arbitrarily scaled factor to set the global loss scale.  The $10^{-8}$ floor is the implementation safeguard used before the post-clipping update.  This convention also makes the Shuffled-$W$ stress test interpretable: it alters the arrangement of a normalized route rather than merely changing the total weight applied to the loss.

\subsection{Preference and anchor losses}

The main objective applies the detached route to the reference-relative local residual advantage.  With policy and reference predictions $\widehat\varepsilon_\theta^{\pm}$ and $\widehat\varepsilon_{\mathrm{ref}}^{\pm}$ at a sampled training timestep, define
\begin{equation}
 \begin{aligned}
 \delta^{\pm}&=
 \frac{1}{C|\mathcal S|}\sum_{q,s}\widetilde W_s(s)
 \left[
 \bigl(\widehat\varepsilon_{\mathrm{ref}}^{\pm}(q,s)-\varepsilon^{\pm}(q,s)\bigr)^2-
 \bigl(\widehat\varepsilon_{\theta}^{\pm}(q,s)-\varepsilon^{\pm}(q,s)\bigr)^2
 \right],\\[-2pt]
 \LossTopo&=-\E\log\sigma\!\left(\beta\widetilde w_\tau(\tau)[\delta^+-\delta^-]\right).
 \end{aligned}
 \label{eq:topo-loss}
\end{equation}
Thus $\widetilde W_s$ weights the per-site residual reduction and $\widetilde w_\tau$ scales the sampled-timestep preference logit.  We further form a deterministic pixel-midpoint anchor $x^a=\mathcal E((x^++x^-)/2)$ and penalize an ordering violation between the policy's spatially weighted prediction distances to the two endpoints:
\begin{equation}
 \LossTri=\E\left[\max\!\left\{0,
 \norm{(\widehat\varepsilon_\theta^a-\widehat\varepsilon_\theta^+)\odot\sqrt{\widetilde W_s}}_2^2-
 \norm{(\widehat\varepsilon_\theta^a-\widehat\varepsilon_\theta^-)\odot\sqrt{\widetilde W_s}}_2^2+\alpha\right\}\right].
 \label{eq:triplet}
\end{equation}
The complete loss is $\LossTopo+\gamma\LossTri$, with $\gamma=0.1$ and $\alpha=0.05$ in the locked run configuration.  We encode the pixel midpoint $\mathcal E((x^++x^-)/2)$ rather than interpolate the two endpoint latents; this fixes a deterministic, symmetric image-space anchor under the same encoder used for both endpoints.  Pixel averaging can contain ghosted or otherwise off-manifold content, so the anchor is not treated as a natural-image target and no generated image is trained to imitate it.  It only supplies a common reference for the relative ordering of the two spatially weighted prediction distances.  Because its three predictions are evaluated at one sampled timestep, it uses the shared factor $\widetilde W_s$, not the full $W(s,\tau)$.

The controlled removals, per-minibatch execution order, and scope of the supporting characterization are specified in Appendix~\ref{app:training-procedure}.  They keep the underlying preference pairs fixed and test sensitivity to the declared routing factors under the locked configuration.

\section{Experiments}
\label{sec:experiments}

\subsection{Settings, metrics, and reporting}

We evaluate SD-1.5 and SDXL with a fixed six-method checkpoint protocol.  Non-\method{} rows are author-released final checkpoints and serve as descriptive within-backbone references.  Locked ToPO runs initialize from SFT-winners (SD-1.5) or SDXL Base and use 500 AdamW updates on Pick-a-Pic v2 pairs \citep{kirstain2023pickapic} after 200 warmup updates.  They are full-U-Net finetunes with effective batches of 1,024 and 512.  A frozen fp32 reference recomputes the residual-contrast route on each minibatch and detaches it before the policy update; Appendix~\ref{app:training-procedure} records precision, route, and triplet settings.  The matched ToPO--Diffusion-DPO study is equal-update and shared-hyperparameter, not equal-GPU-hour or equal-search-budget.

We report PickScore, HPSv2, ImageReward, aesthetic score, CLIP, GenEval \citep{ghosh2023geneval}, and T2I-CompBench \citep{huang2023compbench}.  Tables use audited locked endpoints; bootstrap artifacts resample prompts, not training runs.  Appendix~\ref{app:seed-stability} gives all three matched ToPO--Diffusion-DPO streams per backbone at the shared $\beta=2000$, 500-update endpoint.  Backbone blocks remain separate protocols, and no automatic metric is treated as a universal preference endpoint.

\paragraph{Reporting convention.}  Tables are read within backbone--metric blocks and ablation panels.  Mint/bold and mist-blue/italics mark the largest and second-largest distinct point estimates, not statistical significance; the $.0001$ SDXL CB Avg. near-tie and terminal $r$-gaps are uncolored.

\begin{table*}[!b]
\caption{Descriptive Pick-a-Pic checkpoint outcomes (500 prompts).  Non-\method{} rows use author-released final checkpoints and serve as within-backbone references, not matched retraining comparisons.  Shading denotes point-estimate rank, not statistical significance.}
\label{tab:main}
\centering
\small
\setlength{\tabcolsep}{4.2pt}
\renewcommand{\arraystretch}{1.02}
\begin{tabular}{llccccc}
\toprule
Backbone & Method & PickScore $\uparrow$ & HPSv2 $\uparrow$ & ImageReward $\uparrow$ & Aesthetic $\uparrow$ & CLIP $\uparrow$\\
\midrule
\multirow{6}{*}{SD-1.5}
& Base & 20.543 & .2467 & .074 & 5.325 & 27.199 \\
& Diffusion-DPO & 20.947 & .2586 & .250 & 5.336 & 27.595 \\
& SFT-winners & 21.181 & .2824 & .642 & 5.354 & 28.064 \\
& KTO & 21.108 & .2810 & .618 & 5.351 & 28.003 \\
& InPO & \runnerup{21.410} & \runnerup{.2871} & \runnerup{.748} & \runnerup{5.357} & \best{28.467} \\
& \method{} & \best{21.542} & \best{.2903} & \best{.812} & \best{5.361} & \runnerup{28.393} \\
\midrule
\multirow{6}{*}{SDXL}
& Base & 21.866 & .2869 & .744 & 5.359 & 28.411 \\
& Diffusion-DPO & \runnerup{22.264} & .3005 & .934 & \runnerup{5.365} & \runnerup{29.018} \\
& SFT-winners & 21.481 & .2829 & .647 & 5.348 & 28.526 \\
& InPO & 22.208 & \runnerup{.3044} & \runnerup{.955} & \runnerup{5.365} & 28.842 \\
& MaPO & 21.945 & .2962 & .866 & \best{5.377} & 28.613 \\
& \method{} & \best{22.270} & \best{.3140} & \best{1.032} & 5.360 & \best{29.487} \\
\bottomrule
\end{tabular}
\end{table*}

\subsection{Aggregate preference results}

Table~\ref{tab:main} reports multi-metric outcomes.  On SD-1.5, \method{} has the largest listed point estimate on PickScore, HPSv2, ImageReward, and aesthetic score, while InPO leads CLIP; it exceeds the listed Diffusion-DPO checkpoint on all five.  The second block gives the SDXL profile.  The matched three-seed ToPO--Diffusion-DPO study is the controlled comparison: mean paired effects are positive on all five SD-1.5 metrics and on SDXL HPSv2, ImageReward, and CLIP (Appendix~\ref{app:seed-stability}).  We report seedwise values and sample standard deviations, not prompt-bootstrap intervals as training-seed inference.

\begin{figure*}[!t]
  \centering
  \includegraphics[width=.96\textwidth]{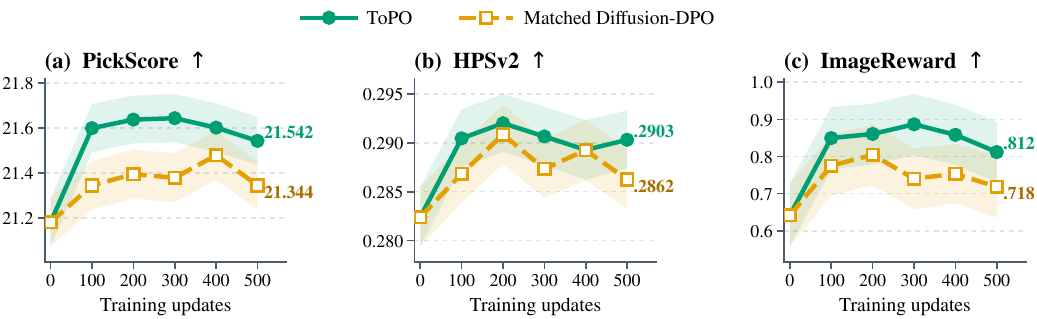}
  \caption{\textbf{Controlled SD-1.5 trajectory.}  PickScore, HPSv2, and ImageReward are evaluated at every checkpoint for one controlled training seed with shared initialization and schedule.  Bands are 95\% prompt-bootstrap intervals on 500 prompts, not training-seed intervals.  Seedwise endpoints are reported separately in Appendix~\ref{app:seed-stability}.}
  \label{fig:sd15-trajectory}
\end{figure*}

\subsection{Blind human preference study}

We conduct a blind SDXL pairwise study with 16 participants, 300 prompts, and one shared seed per prompt.  Prompt-conditioned, unlabeled A/B judgments select overall preference, visual appeal, or prompt faithfulness, with ties allowed \citep{wallace2024diffusiondpo}; Figure~\ref{fig:human-study} reports 300 aggregate responses per comparison--question pair.

\method{} is selected more often in all six bars.  The (\method{}/tie/comparator) triples are $(198,41,61)/(220,42,38)/(202,55,43)$ against SDXL Base and $(190,51,59)/(212,34,54)/(196,55,49)$ against Diffusion-DPO, ordered by the three questions.  They are descriptive aggregate shares, not participant-level significance tests, because the released record is aggregate-only.

\begin{figure*}[!t]
  \centering
  \includegraphics[width=.98\textwidth]{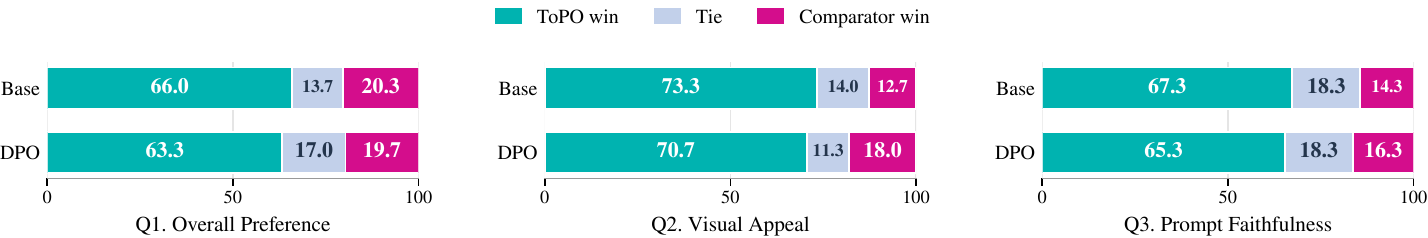}
\caption{Blind SDXL A/B study: 16 participants, hidden methods, and ties allowed.  Bars show 300 descriptive aggregate responses per bar; the released record has no clustered uncertainty.}
  \label{fig:human-study}
\end{figure*}

\begin{table}[!t]
\caption{Compositional estimates. Avg. is unweighted; full suites are in Appendix~\ref{app:full-results}. Shading marks point-estimate rank only; the $.0001$ SDXL CB Avg. near-tie is unshaded.}
\label{tab:compositional}
\centering
\scriptsize
\setlength{\tabcolsep}{2.4pt}
\renewcommand{\arraystretch}{0.94}
\resizebox{\textwidth}{!}{%
\begin{tabular}{lcccccc@{\hspace{12pt}}lcccccc}
\toprule
\multicolumn{7}{c}{SD-1.5} & \multicolumn{7}{c}{SDXL}\\
\cmidrule(lr){1-7}\cmidrule(lr){8-14}
& \multicolumn{3}{c}{GenEval $\uparrow$} & \multicolumn{3}{c}{T2I-CompBench $\uparrow$} & & \multicolumn{3}{c}{GenEval $\uparrow$} & \multicolumn{3}{c}{T2I-CompBench $\uparrow$}\\
\cmidrule(lr){2-4}\cmidrule(lr){5-7}\cmidrule(lr){9-11}\cmidrule(lr){12-14}
Method & Avg.$^{\ddagger}$ & Colors & Position & Avg.$^{\dagger}$ & Color & Shape & Method & Avg.$^{\ddagger}$ & Colors & Position & Avg.$^{\dagger}$ & Color & Shape\\
\midrule
Base & .4378 & .7553 & .0500 & .3393 & .3750 & .3715 & Base & .5783 & .8723 & \runnerup{.1275} & .4533 & .6277 & .4878\\
Diffusion-DPO & .4591 & .7926 & .0625 & .3521 & .4028 & .3842 & Diffusion-DPO & \runnerup{.5917} & .8644 & .1075 & .4974 & \runnerup{.7177} & \runnerup{.5378}\\
SFT-winners & .4878 & .8138 & \runnerup{.0725} & .3870 & .4761 & .4310 & SFT-winners & .5464 & .8670 & .1250 & .4436 & .6089 & .4955\\
KTO & .4879 & \runnerup{.8271} & .0550 & .3887 & .4875 & .4300 & InPO & .5827 & .8750 & .1125 & .4731 & .6731 & .5231\\
InPO & \runnerup{.4925} & .8245 & \runnerup{.0725} & \runnerup{.3984} & \runnerup{.4967} & \runnerup{.4389} & MaPO & .5687 & \runnerup{.8803} & .1225 & .4691 & .6429 & .5197\\
\method{} & \best{.5092} & \best{.8404} & \best{.0775} & \best{.4123} & \best{.5242} & \best{.4579} & \method{} & \best{.6168} & \best{.9069} & \best{.1675} & .4975 & \best{.7190} & \best{.5398}\\
\bottomrule
\end{tabular}}
\end{table}

\subsection{System-level compositional outcomes}

Table~\ref{tab:compositional} reports declared GenEval and shared T2I-CompBench aggregates; Appendix~\ref{app:full-results} gives full suites.

On SD-1.5, \method{} leads both displayed summaries ($.5092/.4123$ versus $.4591/.3521$ for Diffusion-DPO).  On SDXL, it leads GenEval ($.6168$), while the $.4975/.4974$ T2I-CompBench averages are a declared near tie at reported precision.

\subsection{Routing Ablations and Preference-Scale Analysis}

\paragraph{Route-allocation controls.} Panel~A compares three locked SD-1.5 routes. Uniform-$W$ replaces the route by one while retaining the midpoint term. Shuffled-$W$ applies a fresh joint value permutation before each loss evaluation, preserving the route-value multiset while changing its pair--coordinate assignment and local spatial--temporal structure. In this single locked run, the full route has the largest point estimate on all seven displayed outcomes, including PickScore ($21.237\!\to\!21.542$), ImageReward ($.769\!\to\!.812$), and GenEval average ($.4710\!\to\!.5092$) versus Uniform-$W$. Terminal $r$-gap is a within-run diagnostic and is not rank-coded.

\paragraph{Multi-seed route controls.} The single-run profile complements the matched three-seed comparison in Appendix~\ref{app:seed-stability}. Full \method{} exceeds Uniform-$W$ on all five means. Against the coordinate-shuffled stress test, it is higher on PickScore, ImageReward, and CLIP, whereas Shuffled-$W$ is higher on HPSv2 and Aesthetic. The matched evidence therefore supports a selective preference-reward and alignment advantage of the complete allocation design, while separating it from an all-metric dominance claim.

\paragraph{Component and scale sensitivity.} Spatial, timestep, and token removals reduce PickScore by $.295$, $.307$, and $.168$; removing the content-token prior gives the largest reduction ($.429$). Removing the midpoint raises CLIP slightly but lowers PickScore and GenEval. The $\beta$ sweep is non-monotone: $2000$ leads PickScore, CLIP, and GenEval, while $1000$ leads HPSv2 and ImageReward.

\vspace{-0.85em}
\begin{table}[H]
\caption{SD-1.5 ToPO ablations and route-allocation controls. Panel~A uses one locked run per route allocation; matched multi-seed means are in Table~\ref{tab:route-controls}. Shading marks point-estimate rank, not significance.}
\label{tab:ablations}
\centering
\scriptsize
\setlength{\tabcolsep}{3.4pt}
\renewcommand{\arraystretch}{0.80}
\begin{tabular}{lccccccccc}
\toprule
Configuration & PickScore $\uparrow$ & HPSv2 $\uparrow$ & ImageReward $\uparrow$ & Aesthetic $\uparrow$ & CLIP $\uparrow$ & GE Avg.$^{\ddagger}$ $\uparrow$ & GE Count $\uparrow$ & Final $r$-gap\\
\midrule
\multicolumn{9}{l}{\textit{A. Route-allocation controls (one locked run per configuration)}}\\
\method{} (full) & \best{21.542} & \best{.2903} & \best{.812} & \best{5.361} & \best{28.393} & \best{.5092} & \best{.4969} & $3.64\!\times\!10^{-4}$\\
Uniform-$W$ & 21.237 & .2851 & \runnerup{.769} & 5.348 & \runnerup{28.198} & .4710 & \runnerup{.4418} & $4.29\!\times\!10^{-4}$\\
Shuffled-$W$ & \runnerup{21.382} & \runnerup{.2899} & .741 & \runnerup{5.358} & 28.180 & \runnerup{.4926} & .4375 & $-3.47\!\times\!10^{-5}$\\
\midrule
\multicolumn{9}{l}{\textit{B. Component removals}}\\
\method{} (full) & \best{21.542} & \best{.2903} & \best{.812} & 5.361 & \best{28.393} & \best{.5092} & \best{.4969} & $3.64\!\times\!10^{-4}$\\
$w_k\equiv1$ & \runnerup{21.374} & .2842 & .803 & \best{5.365} & 28.271 & .5026 & .4406 & $5.53\!\times\!10^{-4}$\\
$w_\tau\equiv1$ & 21.235 & \runnerup{.2883} & .805 & 5.361 & 28.331 & \runnerup{.5079} & .4500 & $4.27\!\times\!10^{-4}$\\
$w_s\equiv1$ & 21.247 & .2856 & \runnerup{.808} & \runnerup{5.364} & 28.371 & .5026 & .4219 & $5.34\!\times\!10^{-4}$\\
No SOS/EOS prior & 21.113 & .2845 & .788 & 5.362 & 28.311 & .5060 & \runnerup{.4531} & $6.50\!\times\!10^{-4}$\\
No triplet term & 21.295 & .2840 & .807 & 5.363 & \runnerup{28.412} & .4950 & .4375 & $6.81\!\times\!10^{-4}$\\
\midrule
\multicolumn{9}{l}{\textit{C. Preference scale $\beta$}}\\
$500$ & \runnerup{21.484} & .2902 & .804 & \best{5.368} & 28.277 & .4894 & .4062 & $1.00\!\times\!10^{-3}$\\
$1000$ & 21.433 & \best{.2921} & \best{.856} & \runnerup{5.364} & 28.201 & .4901 & \runnerup{.4406} & $8.64\!\times\!10^{-4}$\\
$2000$ (default) & \best{21.542} & \runnerup{.2903} & \runnerup{.812} & 5.361 & \best{28.393} & \best{.5092} & \best{.4969} & $3.64\!\times\!10^{-4}$\\
$4000$ & 21.398 & .2883 & .774 & 5.362 & \runnerup{28.298} & \runnerup{.4972} & .4312 & $4.85\!\times\!10^{-4}$\\
\bottomrule
\end{tabular}
\end{table}

\section{Conclusion}

\method{} turns an offline preference pair into a detached, separable spatial--temporal DPO route without local labels or a learned reward model. A frozen reference supplies residual-contrast factors, preferred-branch cross-attention supplies content-token modulation, and an auxiliary midpoint term imposes symmetric ordering. The resulting field acts over diffusion's native spatial and temporal coordinates rather than as a separate token-level loss.

At the locked 500-update endpoint, matched ToPO--DPO differences are positive on all five SD-1.5 metrics and on SDXL HPSv2, ImageReward, and CLIP. Full \method{} exceeds the route-neutral Uniform-$W$ control on all five matched SD-1.5 means. Relative to the coordinate-shuffled stress test, it is higher on PickScore, ImageReward, and CLIP, while the shuffled route is higher on HPSv2 and Aesthetic. Checkpoint, compositional, and blind SDXL A/B results complement this equal-update evidence for attention-equipped noise-prediction latent-diffusion U-Nets.

\clearpage
\section*{Ethics and Reproducibility Statement}
The blind image-preference study involved 16 participants, all of whom gave informed consent.  The protocol used prompt-conditioned, method-unlabeled A/B judgments with a tie option for overall preference, visual appeal, and prompt faithfulness; Section~\ref{sec:experiments} reports its aggregate win/tie/loss counts.  We treat these aggregate observations as descriptive and do not infer participant-level significance from them.  The supplementary material specifies the method, locked configurations, training and reporting conventions, component tables, and the scope of each supporting characterization.  We will release code, locked configurations, evaluation scripts, and model checkpoints after review.

\section*{AI Use Statement}
We used generative-AI tools for manuscript organization; drafting and editing presentation prose; refinement of methodological, mathematical, and proof exposition from author-specified designs and assumptions; critical review of claim and experiment descriptions; \LaTeX{} editing; and schematic-figure layout.  We did not use generative AI to generate evaluation data, execute experiments, or introduce unverified numerical results.  The authors reviewed and verified all AI-assisted text, equations, formal claims, code, tables, and visual artifacts against the reported methods and results, and take responsibility for the final manuscript.

\clearpage
\bibliographystyle{iclr2027_conference}
\bibliography{references}

\clearpage
\appendix
\setlength{\dbltextfloatsep}{20pt plus 2pt minus 4pt}
\section{Supporting Characterization}
\label{app:analysis}

\noindent\begingroup\setlength{\fboxsep}{4pt}
\colorbox{RunnerUpMist}{\parbox{\dimexpr\linewidth-2\fboxsep-2pt\relax}{\footnotesize\textbf{Appendix guide.} \hyperref[app:analysis]{A Supporting characterization} \enspace $\cdot$ \enspace \hyperref[app:proofs]{B Proofs and implementation details} \enspace $\cdot$ \enspace \hyperref[app:full-results]{C Additional results} \enspace $\cdot$ \enspace \hyperref[app:token-details]{D Token-modulation diagnostics}}}\endgroup
\vspace{3pt}

This supporting analysis records properties of three mathematical objects created by the method.  First, for the per-minibatch configuration shared by both backbones, a frozen-reference route yields a stochastic update field.  Second, the routed field obeys a one-step gradient-energy envelope.  Third, at a frozen checkpoint, the token pathway computes an attention-weighted branchwise residual-contrast statistic with an explicit population target.  These propositions formalize the declared construction under their assumptions; they do not prove semantic localization, establish the preferred attention side as uniquely correct, or analyze AdamW's moment-state dynamics.  The proofs and all probability-space conventions appear in Appendix~\ref{app:proofs}.

\subsection{Adaptive detached field}

Let $\mathcal F_t$ contain the optimization history through $\theta_t$, and let $Z_{t+1}$ contain the next preference minibatch together with the diffusion randomness used by the loss.  We distinguish the unclipped and post-clipping gradients:
\begin{equation}
 \widetilde G_t=\nabla_\theta^{\mathrm{sg}}
 \ell\!\left(\theta_t,Z_{t+1};\mathcal W_{\mathrm{ref}}(Z_{t+1})\right),\qquad
 G_t=\operatorname{Clip}_{1.0}(\widetilde G_t),
 \label{eq:adaptive-field}
\end{equation}
where $\mathrm{sg}$ detaches the frozen-reference attributor.  In the current-batch setting, $W_t=\mathcal W_{\mathrm{ref}}(Z_{t+1})$ is recomputed from the same current batch, but its derivative is not taken.  Let $h(\theta)=\E_{Z\sim P_0}[G(\theta,Z)]$.  Appendix~\ref{app:proofs} collects the update-field conditions (U1)--(U4): with-replacement sampling, the post-clipping bound $\norm{G_t}_2\leq G_0=1$, local dissipativity, and a Robbins--Monro schedule.

\begin{proposition}[Current-batch detached-SGD field]
\label{thm:adaptive}
Under (U1)--(U4), $\xi_{t+1}=G_t-h(\theta_t)$ is a martingale difference.  The projected SGD-form iteration in the declared local basin converges almost surely to its stable root $\theta^\dagger$.
\end{proposition}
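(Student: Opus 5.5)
The argument splits in two: first the martingale-difference property, obtained from the with-replacement sampling in (U1) and the post-clipping bound in (U2); then almost-sure convergence, obtained from a standard Robbins--Monro/ODE argument that uses local dissipativity (U3) and the step-size schedule (U4).

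\emph{Step 1 (martingale difference).} By (U1), $Z_{t+1}$ is drawn from $P_0$ independently of $\mathcal F_t$. The detached route $\mathcal W_{\mathrm{ref}}(Z_{t+1})$ is computed from the frozen reference and the current batch only, so it carries no dependence on $\theta_t$ beyond what $\mathcal F_t$ already fixes. Hence $G_t=G(\theta_t,Z_{t+1})$ for a single measurable map $G$, namely the post-clipping, stop-gradient gradient. Because $\theta_t$ is $\mathcal F_t$-measurable and $Z_{t+1}$ is independent of $\mathcal F_t$, the freezing lemma gives
$\E[G_t\mid\mathcal F_t]=\E_{Z\sim P_0}[G(\theta,Z)]\big|_{\theta=\theta_t}=h(\theta_t)$.
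Integrability follows from $\|G_t\|_2\le 1$. Therefore $\xi_{t+1}=G_t-h(\theta_t)$ satisfies $\E[\xi_{t+1}\mid\mathcal F_t]=0$ and $\|\xi_{t+1}\|_2\le 2$. The one point to state explicitly is that the route is recomputed from the same batch, so $h$ must be defined with the route inside the expectation. This is harmless because the route is a deterministic, detached function of $Z$ and enters only as a factor in $G$.

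\emph{Step 2 (convergence).} Write the iteration as $\theta_{t+1}=\Pi_K(\theta_t-\eta_t(h(\theta_t)+\xi_{t+1}))$, where $K$ is the declared compact basin. Set $V(\theta)=\|\theta-\theta^\dagger\|_2^2$, and read local dissipativity (U3) as $\langle h(\theta),\theta-\theta^\dagger\rangle\ge \mu\|\theta-\theta^\dagger\|^2$ on $K$ with $h(\theta^\dagger)=0$. Projection onto a convex set containing $\theta^\dagger$ is nonexpansive, which gives
\[
V(\theta_{t+1})\le V(\theta_t)-2\eta_t\langle h(\theta_t),\theta_t-\theta^\dagger\rangle-2\eta_t\langle\xi_{t+1},\theta_t-\theta^\dagger\rangle+\eta_t^2\|G_t\|^2 .
\]
Taking conditional expectations gives $\E[V_{t+1}\mid\mathcal F_t]\le(1-2\mu\eta_t)V_t+\eta_t^2$, using the bound $\|G_t\|_2\le 1$ from (U2). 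Under (U4), $\sum_t\eta_t^2<\infty$. The Robbins--Siegmund lemma then shows that $V_t$ converges almost surely and that $\sum_t\eta_t V_t<\infty$ almost surely. Since $\sum_t\eta_t=\infty$, we get $\liminf_t V_t=0$, and combined with the convergence of $V_t$ this forces $V_t\to0$ almost surely, that is, $\theta_t\to\theta^\dagger$. If (U3) gives only a weaker, non-strongly-monotone dissipativity, the fallback is the Kushner--Yin ODE method: the noise sum $\sum_t\eta_t\xi_{t+1}$ converges almost surely as an $L^2$-bounded martingale, so the iterates track the projected ODE $\dot\theta=-h(\theta)$, and $V$ serves as a Lyapunov function with $\theta^\dagger$ its unique stable point in $K$.

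\emph{Main obstacle.} The difficulty is not the martingale algebra but reconciling clipping with the basin assumptions. Clipping makes $G$ nonsmooth in $\theta$, so $h$ may fail to be Lipschitz. The plan avoids needing smoothness by placing the dissipativity inequality in (U3) directly on the clipped mean field $h$ rather than on the unclipped gradient. I would verify that (U3) is phrased this way, and that the projection keeps the iterates in $K$, so that the inequality is used only where it holds. The claim concerns the SGD-form iteration only, not AdamW's moment dynamics, so no preconditioner has to be controlled.
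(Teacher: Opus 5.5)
Your proposal is correct and follows essentially the same route as the paper. It uses (U1) to get $\E[G_t\mid\mathcal F_t]=h(\theta_t)$, with the detached route absorbed into the measurable map $z\mapsto G(\theta_t,z)$. It then derives the nonexpansive-projection Lyapunov recursion $\E[V_{t+1}\mid\mathcal F_t]\le(1-2\mu\eta_t)V_t+\eta_t^2$, applies a Robbins--Siegmund step, and uses $\sum_t\eta_t=\infty$ to force the almost-sure limit to zero. Compactness of the basin and the ODE fallback are unnecessary, since (U3) is already stated as strong dissipativity of the clipped mean field $h$ on the closed convex set $\mathcal B$.
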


The proposition isolates the role of an adaptive detached route in an SGD-form recursion.  The reported optimizer uses constant-step AdamW; Proposition~\ref{thm:constant-step} gives the corresponding fixed-step characterization for the routed SGD-form field, while AdamW's moment-state dynamics remain implementation-specific.

\begin{proposition}[Gradient-energy envelope]
\label{thm:energy}
Suppose the \emph{unclipped} gradient has the local representation $\widetilde G_t=R D(W_t)\mathbf u_t$, where $D(W_t)$ is diagonal multiplication on local coordinate gradients and $R$ maps them to parameter space.  Then
\begin{equation}
 \operatorname{Var}(G_t\mid\mathcal F_t)
 \leq \norm R_{\mathrm{op}}^2\,
 \E[\norm{W_t}_\infty^2\norm{\mathbf u_t}_2^2\mid\mathcal F_t]
 -\norm{h(\theta_t)}_2^2.
 \label{eq:energy-envelope}
\end{equation}
\end{proposition}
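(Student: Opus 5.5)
The plan is to use the elementary second-moment decomposition of a conditional variance and then bound the second moment through the local representation. Interpreting $\operatorname{Var}(G_t\mid\mathcal F_t)$ as the total (trace) conditional variance $\E[\norm{G_t-\E[G_t\mid\mathcal F_t]}_2^2\mid\mathcal F_t]$, the first step is to identify the conditional mean. Under the with-replacement sampling condition (U1), $Z_{t+1}$ is independent of $\mathcal F_t$ and has law $P_0$, and $\theta_t$ is $\mathcal F_t$-measurable. Hence $\E[G_t\mid\mathcal F_t]=h(\theta_t)$; this is the same fact used for the martingale-difference claim in Proposition~\ref{thm:adaptive}. Expanding the square then gives
\begin{equation*}
\operatorname{Var}(G_t\mid\mathcal F_t)=\E[\norm{G_t}_2^2\mid\mathcal F_t]-\norm{h(\theta_t)}_2^2 .
\end{equation*}
This accounts for the subtracted term in \eqref{eq:energy-envelope}, so it remains to bound the conditional second moment of $G_t$.

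The second step passes from the clipped gradient to the unclipped one. The map $\operatorname{Clip}_{1.0}(g)=g\min\{1,1/\norm g_2\}$ only rescales by a factor in $(0,1]$, so $\norm{G_t}_2\le\norm{\widetilde G_t}_2$ pointwise. The third step uses the representation $\widetilde G_t=RD(W_t)\mathbf u_t$, which gives
\begin{equation*}
\norm{\widetilde G_t}_2\le\norm R_{\mathrm{op}}\,\norm{D(W_t)}_{\mathrm{op}}\norm{\mathbf u_t}_2 .
\end{equation*}
Since $D(W_t)$ is diagonal multiplication by the entries of the detached field, its operator norm is the largest absolute entry, $\norm{W_t}_\infty$. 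Squaring this bound, taking $\E[\cdot\mid\mathcal F_t]$, and noting that $R$ is deterministic or $\mathcal F_t$-measurable (so $\norm R_{\mathrm{op}}^2$ factors out of the expectation) yields the stated envelope.

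The main obstacle is bookkeeping rather than analysis. First, the clipping step must be applied to the second moment and not to the variance, because clipping shifts the mean; this is why the envelope is stated with $h$, the mean of the clipped field. Second, $D(W_t)$ must be read as entrywise multiplication by the nonnegative route values. The route $W(s,\tau)=W_s(s)w_\tau(\tau)$ weights each local coordinate gradient. If one channel/site coordinate is repeated across channels $q$, the diagonal entries are still drawn from the values of $W_t$, so the operator norm is exactly $\norm{W_t}_\infty$.

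A further point is that in the actual loss the logit weight $\widetilde w_\tau$ multiplies the sigmoid derivative, which is nonlinear in the loss. I would absorb that scalar factor $\beta(1-\sigma(\cdot))\le\beta$ into $\mathbf u_t$, which is legitimate because the proposition only assumes the representation exists. I would also remark that integrability holds because $\norm{G_t}_2\le1$, so every conditional expectation used is finite. The right-hand side may be infinite if $\mathbf u_t$ is not square-integrable, in which case the inequality is trivial.
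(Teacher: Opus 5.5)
Your proposal is correct and follows the paper's proof step for step. Both use the conditional-variance identity $\operatorname{Var}(G_t\mid\mathcal F_t)=\E[\norm{G_t}_2^2\mid\mathcal F_t]-\norm{h(\theta_t)}_2^2$, the pointwise clipping bound $\norm{G_t}_2\leq\norm{\widetilde G_t}_2$, and the operator-norm bound $\norm{RD(W_t)\mathbf u_t}_2\leq\norm R_{\mathrm{op}}\norm{W_t}_\infty\norm{\mathbf u_t}_2$, then square and condition. The paper uses two assumptions only tacitly that you state explicitly: (U1) is what makes the conditional mean equal $h(\theta_t)$, and $R$ must be deterministic or $\mathcal F_t$-measurable for $\norm R_{\mathrm{op}}^2$ to come outside the conditional expectation.
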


\eqref{eq:energy-envelope} is a one-iteration envelope for the energy carried by a routed update.  It motivates monitoring route scale jointly with the underlying local-gradient energy.

\begin{proposition}[Constant-step detached-SGD neighborhood]
\label{thm:constant-step}
Under (U1)--(U3), if $0<\eta<1/(2\mu)$, then the projected SGD-form constant-step iterate obeys
\begin{equation}
 \E\norm{\theta_T-\theta^\dagger}_2^2
 \leq (1-2\mu\eta)^T\norm{\theta_0-\theta^\dagger}_2^2
 +\frac{\eta G_0^2}{2\mu}.
 \label{eq:constant-step}
\end{equation}
\end{proposition}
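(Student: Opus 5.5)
We treat the statement as the standard constant-step projected SGD recursion $\theta_{t+1}=\Pi_{\mathcal K}(\theta_t-\eta G_t)$ on the declared local basin $\mathcal K$, a closed convex set containing $\theta^\dagger$. We use (U1)--(U3) as follows. Condition (U1) gives with-replacement sampling, so $Z_{t+1}$ is independent of $\mathcal F_t$ with law $P_0$. By the same conditioning argument as in Proposition~\ref{thm:adaptive}, this yields $\E[G_t\mid\mathcal F_t]=h(\theta_t)$, since the detached route $\mathcal W_{\mathrm{ref}}(Z_{t+1})$ is a measurable function of the fresh batch only. Condition (U2) gives $\norm{G_t}_2\le G_0$. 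We read the local dissipativity condition (U3) as
\begin{equation}
\inner{h(\theta)}{\theta-\theta^\dagger}\ge \mu\norm{\theta-\theta^\dagger}_2^2
\end{equation}
for all $\theta\in\mathcal K$, with $h(\theta^\dagger)=0$. We do not need (U4), because the step is constant.

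The first step is nonexpansiveness of the projection. Since $\theta^\dagger=\Pi_{\mathcal K}(\theta^\dagger)$, we have $\norm{\theta_{t+1}-\theta^\dagger}_2\le\norm{\theta_t-\eta G_t-\theta^\dagger}_2$. Expanding the square gives
\begin{equation}
\norm{\theta_{t+1}-\theta^\dagger}_2^2\le \norm{\theta_t-\theta^\dagger}_2^2-2\eta\inner{G_t}{\theta_t-\theta^\dagger}+\eta^2\norm{G_t}_2^2 .
\end{equation}
Next we take the conditional expectation given $\mathcal F_t$. The cross term becomes $\inner{h(\theta_t)}{\theta_t-\theta^\dagger}$, because $\theta_t$ is $\mathcal F_t$-measurable. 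The last term is at most $\eta^2G_0^2$. Dissipativity then yields
\begin{equation}
\E\bigl[\norm{\theta_{t+1}-\theta^\dagger}_2^2\mid\mathcal F_t\bigr]\le(1-2\mu\eta)\norm{\theta_t-\theta^\dagger}_2^2+\eta^2G_0^2 .
\end{equation}
Taking total expectations and writing $e_t=\E\norm{\theta_t-\theta^\dagger}_2^2$, we obtain the scalar recursion $e_{t+1}\le(1-2\mu\eta)e_t+\eta^2G_0^2$.

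We then unroll this recursion. The condition $0<\eta<1/(2\mu)$ gives $\rho=1-2\mu\eta\in(0,1)$. Unrolling, we get
\begin{equation}
e_T\le\rho^Te_0+\eta^2G_0^2\sum_{j<T}\rho^j\le\rho^Te_0+\frac{\eta^2G_0^2}{2\mu\eta}.
\end{equation}
The final term equals $\eta G_0^2/(2\mu)$, which is exactly the stated bound \eqref{eq:constant-step}. Since $\theta_0$ is deterministic, $e_0=\norm{\theta_0-\theta^\dagger}_2^2$.

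The main obstacle is bookkeeping rather than analysis. We must make sure that dissipativity is applied only at points where it holds, and the projection onto the basin ensures $\theta_t\in\mathcal K$ for every $t$. We must also check that clipping does not break unbiasedness. This is handled by defining $h$ as the mean of the \emph{clipped} field $G(\theta,Z)$, as the paper does, so no bias term appears. If (U3) is stated only on a neighborhood, we take $\mathcal K$ to be a closed ball inside that neighborhood, and the argument goes through unchanged.
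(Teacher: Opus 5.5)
Your proof is correct and follows the paper's argument essentially line by line. You use nonexpansiveness of $\Pi_{\mathcal B}$ and expand the square, then apply conditional unbiasedness of the clipped field, the bound $\eta^2G_0^2$, and local dissipativity to reach $a_{t+1}\le(1-2\mu\eta)a_t+\eta^2G_0^2$; finally you unroll and bound the geometric sum by $1/(2\mu\eta)$. One small correction: projection only guarantees $\theta_t\in\mathcal B$ for $t\ge1$, so applying dissipativity at $t=0$ also needs $\theta_0\in\mathcal B$, which the paper likewise assumes without stating.
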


This result characterizes the stable constant-step neighborhood.  Under the decreasing schedule in (U4), Proposition~\ref{thm:adaptive} supplies the corresponding almost-sure conclusion for the projected SGD-form recursion.

\subsection{Attention-weighted residual contrast}

The following result is separate from the adaptive-update analysis.  Fix a checkpoint $\bar\theta$ and evaluate its detached attributor on held-out i.i.d. draws.  Let $M_{\bar\theta}(Z,s)=\operatorname{Norm}_1[d_s(Z,s)]$ denote the pre-prior spatial residual-contrast map from \eqref{eq:credit-marginals}, and let $\bar A_{k,\bar\theta}^{+}(s;Z)$ be the declared winner-side attention map after the same aggregation used by the residual-contrast route.  The token pathway has the population target
\begin{equation}
 \rho_k^{\mathrm{disc}}=
 \frac{\E_{Z\sim P_0}\!\left[\sum_{s\in\mathcal S}
 \bar A_{k,\bar\theta}^{+}(s;Z)M_{\bar\theta}(Z,s)\right]}
 {\E_{Z\sim P_0}\!\left[\sum_{s\in\mathcal S}\bar A_{k,\bar\theta}^{+}(s;Z)\right]}.
 \label{eq:token-contrast-target}
\end{equation}
This normalization matters because a token can receive different total attention mass on different prompts.  It makes $\rho_k^{\mathrm{disc}}$ the residual contrast observed at spatial sites selected by token $k$ under the frozen route's attention measure.

\begin{proposition}[Frozen modulation-statistic consistency]
\label{thm:token-contrast}
Under the frozen-statistic conditions (F1)--(F4), the self-normalized pre-prior token statistic $\widehat w^{\mathrm{data}}_{k,n}$ converges in probability to $\rho_k^{\mathrm{disc}}$.
\end{proposition}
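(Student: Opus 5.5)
The plan is to write the self-normalized statistic as a ratio of two sample means and apply the weak law of large numbers to each, then the continuous mapping theorem (equivalently, Slutsky's lemma) to the ratio. Concretely, given held-out i.i.d. draws $Z_1,\dots,Z_n\sim P_0$ evaluated at the frozen checkpoint $\bar\theta$, I will define
\[
 N_n=\frac1n\sum_{i=1}^n\sum_{s\in\mathcal S}\bar A_{k,\bar\theta}^{+}(s;Z_i)M_{\bar\theta}(Z_i,s),\qquad
 D_n=\frac1n\sum_{i=1}^n\sum_{s\in\mathcal S}\bar A_{k,\bar\theta}^{+}(s;Z_i),
\]
so that $\widehat w^{\mathrm{data}}_{k,n}=N_n/D_n$. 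This is the form I take (F1)--(F4) to declare for the pre-prior statistic before the content-token prior mixing in \eqref{eq:token-modulation}: the content indicator is fixed for a content token $k$, and the self-normalization is by aggregate attention mass rather than by the per-sample $\operatorname{Norm}_1$ over tokens. I expect (F1)--(F4) to supply exactly what is needed: (i) i.i.d. held-out draws; (ii) a fixed checkpoint and a deterministic, measurable attributor, so that the summands are i.i.d. functions of $Z_i$; (iii) integrability; and (iv) a positive denominator target, $\E\bigl[\sum_s\bar A_{k,\bar\theta}^+(s;Z)\bigr]>0$.

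The integrability step is routine. Attention probabilities lie in $[0,1]$, and after head, anchor and layer averaging and bilinear resizing to the latent grid, $\bar A^+_k(s;Z)$ stays nonnegative and bounded by a constant depending only on $|\mathcal S|$ and the resize kernel. Bilinear weights are nonnegative convex combinations, so the bound survives resizing. The map $M_{\bar\theta}(Z,\cdot)=\operatorname{Norm}_1[d_s]$ lies in the simplex, taking $M\equiv$ uniform, or any fixed convention, when $d_s\equiv0$. Hence both summands are bounded random variables, with finite first and second moments. The weak law (Chebyshev suffices) then gives $N_n\to\E[\cdots]$, the numerator of \eqref{eq:token-contrast-target}, and $D_n\to\E[\sum_s\bar A^+_k]$ in probability.

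The last step is to apply $g(a,b)=a/b$, which is continuous at $(a,b)$ whenever $b>0$. The continuous mapping theorem then yields $N_n/D_n\to\rho_k^{\mathrm{disc}}$ in probability. To handle the event $\{D_n=0\}$, on which the statistic is set by convention, I will note that its probability tends to zero because $D_n\to b>0$, so it does not affect convergence in probability.

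I expect the main obstacle to be the measurability and well-definedness of the frozen attributor rather than the limit theorem itself. Three issues need care: the $10^{-8}$ floor and the $\operatorname{Norm}_1$ degenerate case must be fixed conventions so that $M_{\bar\theta}$ is a bounded Borel function of $Z$; the anchor-noise draws $\varepsilon_i$ must be included in $Z$ so that the summands are genuinely i.i.d.; and one must confirm that the estimator normalizes by pooled attention mass, as in \eqref{eq:token-contrast-target}, rather than by per-sample ratios. Per-sample ratios would converge instead to $\E[\text{ratio}]$, which is generally a different quantity. I would state this distinction explicitly, since it is exactly why the target is a ratio of expectations.
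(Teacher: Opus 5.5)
Your proposal is correct and follows essentially the same route as the paper. The paper writes $\widehat w^{\mathrm{data}}_{k,n}$ as the pooled ratio $\sum_i X_{i,k}/\sum_i Y_{i,k}$, applies the weak law jointly to the two sample means under (F1)--(F2), and then uses $\E[Y_{1,k}]>0$ from (F4) with the continuous-mapping theorem for $(x,y)\mapsto x/y$. Your extra observations are valid refinements that the paper leaves implicit: the summands are bounded by (F3)--(F4), which makes (F2) automatic, and the event $\{D_n=0\}$ has vanishing probability.
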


The proposition gives an exact population interpretation of the data-dependent component of the frozen token pathway: each token is scored by the residual contrast at the locations to which the declared attention map assigns mass.  The content-token prior in \eqref{eq:token-modulation} then supplies the declared mixture used for training.

\subsection{Scale monitoring for routed updates}

\begin{proposition}[$\beta$--$W$ scale envelope]
\label{prop:beta-w-scale}
For a local ToPO preference contribution $\beta W_j\delta_j$, its magnitude is bounded by $\beta\norm W_\infty|\delta_j|$.  For the aggregate routed logit $\beta\inner{W}{\boldsymbol\delta}$, the corresponding bound is $\beta\norm W_\infty\norm{\boldsymbol\delta}_1$.  Hence $\beta\,\E[\norm W_\infty]$ is an operational summary of the sample-dependent scale introduced by the routed field.
\end{proposition}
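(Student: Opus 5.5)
The bound follows from the triangle inequality for an inner product between a nonnegative route and a signed residual vector, together with Hölder's inequality.

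\textbf{Setup.} I index the local coordinates by $j$, ranging over the declared support of the routed field, and treat $W=(W_j)_j$ and $\boldsymbol\delta=(\delta_j)_j$ as finite vectors on this support. By construction $W$ is nonnegative: in \eqref{eq:joint-weight} it is a product of normalized nonnegative factors and attention probabilities, mean-normalized with a positive floor. I take $\beta>0$.

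\textbf{Local contribution.} For a single coordinate,
\[
|\beta W_j\delta_j|=\beta|W_j||\delta_j|\le\beta\norm W_\infty|\delta_j|,
\]
since $|W_j|\le\max_{j'}|W_{j'}|=\norm W_\infty$.

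\textbf{Aggregate logit.} I write $\inner W{\boldsymbol\delta}=\sum_j W_j\delta_j$. The triangle inequality and the local bound give
\[
\beta|\inner W{\boldsymbol\delta}|\le\sum_j\beta|W_j||\delta_j|\le\beta\norm W_\infty\norm{\boldsymbol\delta}_1 .
\]
This is exactly Hölder's inequality with the conjugate pair $(\infty,1)$.

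\textbf{Consequence.} Taking expectations under the sampling of $(Z,\tau,\varepsilon)$, and conditioning on $\boldsymbol\delta$ or assuming independence where a factorization is wanted, the expected logit magnitude is controlled by $\beta\,\E[\norm W_\infty]$ times a residual-scale quantity. This justifies calling $\beta\,\E[\norm W_\infty]$ an operational summary of the route-induced scale. The summary is interpretive and should be stated as motivated by the bound, not as a sharp equality.

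\textbf{Main obstacle.} The only delicate point is matching this abstract inner product to the actual logit in \eqref{eq:topo-loss}. There the route enters as $\widetilde w_\tau(\tau)\,\widetilde W_s(s)$ with a $1/(C|\mathcal S|)$ averaging and a sum over channels. I would absorb the channel sum and the normalization into $\delta_j$, and take $W_j=\widetilde W_s(s)\widetilde w_\tau(\tau)$ at the sampled timestep. The stop-gradient does not affect the magnitude bound.
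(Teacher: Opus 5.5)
Your proposal is correct and matches the paper's proof essentially step for step: the coordinatewise bound $|W_j|\le\norm W_\infty$, the triangle inequality (H\"older with the $(\infty,1)$ pair) for the aggregate logit, and then taking the expectation of the nonnegative factor $\beta\norm W_\infty$ as an interpretive monitoring scale rather than a sharp equality. Your extra step of identifying $W_j$ with $\widetilde W_s(s)\widetilde w_\tau(\tau)$ in the actual ToPO logit goes slightly beyond what the paper states, but it is consistent with it.
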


The proposition motivates monitoring $\beta\norm W_\infty$ whenever \method{} introduces nonuniform routing.  Panel~C of Table~\ref{tab:ablations} reports the resulting finite scale sweep under the fixed protocol.

\renewcommand{\thesubsection}{\thesection.\arabic{subsection}}
\setcounter{lemma}{0}
\renewcommand{\thelemma}{\thesection.\arabic{lemma}}
\section{Formal Proofs and Implementation Details}
\label{app:proofs}

This appendix keeps two probability constructions separate.  The current-minibatch update analysis studies the reported ToPO recursion shared by SD-1.5 and SDXL, whose preference data and diffusion noise are resampled at each iteration.  The frozen-statistic analysis fixes a checkpoint and evaluates an attention-weighted residual-contrast statistic on held-out independent draws.  This separation aligns every formal result with the computation it analyzes, rather than treating the training recursion and the diagnostic statistic as interchangeable.

\paragraph{Proof roadmap.}
The update conditions (U1)--(U4) govern the current-batch, frozen-reference detached field; the frozen conditions (F1)--(F4) govern only the held-out token statistic.  The map below makes the dependency structure explicit before introducing the technical details.
\begin{center}
\renewcommand{\arraystretch}{1.10}
\footnotesize
\begin{tabular}{@{}lll@{}}
\toprule
Formal result & Object analyzed & Conditions and proof \\
\midrule
Proposition~\ref{thm:adaptive} & current-batch detached-SGD field & (U1)--(U4), \S\ref{app:adaptive-proof} \\
Proposition~\ref{thm:energy} & one-step routed energy & local $RD(W)\mathbf u$ form, \S\ref{app:energy-proof} \\
Proposition~\ref{thm:constant-step} & constant-step detached-SGD field & (U1)--(U3), \S\ref{app:energy-proof} \\
Proposition~\ref{thm:token-contrast} & frozen modulation statistic & (F1)--(F4), \S\ref{app:frozen-proof} \\
Proposition~\ref{prop:beta-w-scale} & routed-logit scale & norm inequality, \S\ref{app:scale-proof} \\
\bottomrule
\end{tabular}
\end{center}
\vspace{-3pt}

\subsection{Probability spaces and separated assumptions}

For the adaptive recursion, let $Z=(C,X^+,X^-,\Omega)$ include a preference triple and every diffusion random variable used in one gradient calculation.  Let $P_0$ denote its population distribution.  Define
\begin{equation}
 \widetilde G(\theta,Z)=
 \nabla_\theta^{\mathrm{sg}}\ell\!\left(
 \theta,Z;\mathcal W_{\mathrm{ref}}(Z)\right),\qquad
 G(\theta,Z)=\operatorname{Clip}_{1.0}(\widetilde G(\theta,Z)),
 \qquad h(\theta)=\E_{P_0}[G(\theta,Z)].
 \label{eq:app-adaptive-field}
\end{equation}
The stop-gradient symbol means that $\mathcal W_{\mathrm{ref}}(Z)$ is evaluated by the frozen reference on the same $Z$ but is constant when differentiating the policy loss.  Let $\mathcal F_t=\sigma(\theta_0,Z_1,\ldots,Z_t)$.

For the token-contrast analysis, $\bar\theta$ is a frozen checkpoint and $Z_1,\ldots,Z_n\stackrel{\mathrm{i.i.d.}}{\sim}P_0$ are held-out evaluation draws.  The notation $\bar A^+_{k,\bar\theta}(s;Z)$ denotes the nonnegative winner-side attention mass after the implementation's declared head, layer, and anchor aggregation.  The spatial grid $\mathcal S$ is fixed by the frozen protocol, and $M_{\bar\theta}(Z,s)=\operatorname{Norm}_1[d_s(Z,s)]$ is the pre-prior residual-contrast map from \eqref{eq:credit-marginals}.

The token-contrast result uses the following regularity conditions.
\begin{description}
  \item[(F1) Frozen evaluation.] The checkpoint $\bar\theta$ and all attribution settings are fixed before drawing the held-out evaluation sample.  The $Z_i$ are independent draws from $P_0$.
  \item[(F2) Integrability.] The attention-weighted residual-contrast sum in \eqref{eq:ratio-estimator} and its attention-mass denominator have finite first moments.
  \item[(F3) Finite attribution grid.] The declared latent spatial grid $\mathcal S$ is finite and fixed with the frozen evaluation protocol.
  \item[(F4) Declared attention measure.] $0\leq\bar A^+_{k,\bar\theta}(s;Z)\leq1$, and $b_k(Z):=\sum_{s\in\mathcal S}\bar A^+_{k,\bar\theta}(s;Z)$ satisfies $0<\E[b_k(Z)]<\infty$.
\end{description}
These conditions are directly tied to the frozen residual-contrast computation: they ensure that the attention-weighted residual contrast and its self-normalization are well-defined population quantities.

The adaptive-field analysis uses a different set of update conditions.
\begin{description}
  \item[(U1) With-replacement current-minibatch idealization.] Conditional on $\mathcal F_t$, $Z_{t+1}\sim P_0$.  The frozen-reference map $W_t=\mathcal W_{\mathrm{ref}}(Z_{t+1})$ is recomputed from that same draw.
  \item[(U2) Post-clipping bound.] $G$ in \eqref{eq:app-adaptive-field} is measurable and $\norm{G(\theta,Z)}_2\leq G_0=1$.  This is the contract of `clip\_grad\_norm\_(1.0)' when $G$ is defined after clipping.
  \item[(U3) Local dissipativity.] There is a nonempty closed convex set $\mathcal B$, a point $\theta^\dagger\in\mathcal B$ with $h(\theta^\dagger)=0$, and $\mu>0$ such that
  \begin{equation}
  \inner{\theta-\theta^\dagger}{h(\theta)}
  \geq\mu\norm{\theta-\theta^\dagger}_2^2
  \quad\text{for every }\theta\in\mathcal B.
  \label{eq:aw3}
  \end{equation}
  \item[(U4) Robbins--Monro schedule.] For the almost-sure statement only, $\eta_t>0$, $\sum_t\eta_t=\infty$, $\sum_t\eta_t^2<\infty$, and eventually $\eta_t\leq1/(2\mu)$.
\end{description}
The analyzed recursion is
\begin{equation}
 \theta_{t+1}=\Pi_{\mathcal B}
 \left[\theta_t-\eta_{t+1}G(\theta_t,Z_{t+1})\right],
 \label{eq:projected-sgd}
\end{equation}
where the Euclidean projection is nonexpansive because $\mathcal B$ is closed and convex.  The reported epoch-shuffled sampler and constant-learning-rate AdamW instantiate the current-batch route with an optimizer state beyond the projected SGD recursion.  Proposition~\ref{thm:constant-step} supplies the matching constant-step neighborhood characterization for the SGD-form field.

\subsection{Training procedure and scope}
\label{app:training-procedure}

Algorithm~\ref{alg:topo-training} fixes the execution order of one reported ToPO minibatch update.  It is a procedural restatement of the route construction and losses in Section~\ref{sec:method}: it introduces neither an additional loss coordinate nor a standalone token reward.  At every update, the full-precision residual-contrast pass constructs the route from the current preference minibatch and is detached before the loss backward pass.  Thus, the optimizer differentiates the routed objective but not the attributor; this execution order is shared by the SD-1.5 and SDXL implementations.

\begin{algorithm}[H]
\caption{One detached preference-update step for ToPO.}
\label{alg:topo-training}
\small
\begin{algorithmic}[1]
\Require Frozen reference attributor; policy $\theta$; current preference minibatch $\mathcal B_t$;
anchors $\mathcal A=\{50,349,649,949\}$; $\beta=2000$, $\gamma=.1$, $\alpha=.05$
\State Run the residual-contrast route on $\mathcal B_t$ in full precision at every $\tau_i\in\mathcal A$; collect $\{D_i,A_i^+\}_{i\in\mathcal A}$
\State Construct, mask, and normalize $w_s$, $w_\tau$, and $w_k$ using Eqs.~(\ref{eq:residual-contrast})--(\ref{eq:token-modulation})
\State $W_s(s)\gets\operatorname{Norm}_{\mathrm{mean}}\!\left(w_s(s)\sum_k \bar A_k^+(s)w_k(k)\right)$; $W(s,\tau)\gets W_s(s)w_\tau(\tau)$
\State $(\widetilde W_s,\widetilde w_\tau)\gets\operatorname{sg}[W_s,w_\tau]$ \Comment{detach current-minibatch route}
\State Draw diffusion noise and training times for $\mathcal B_t$; evaluate $\LossTopo$ with $\widetilde W$
\State $x^a\gets\mathcal E((x^++x^-)/2)$ for each pair in $\mathcal B_t$; evaluate $\LossTri$ with $\widetilde W_s$ \Comment{Eq.~(\ref{eq:triplet})}
\State $\mathcal L\gets\LossTopo+\gamma\LossTri$; compute $G\gets\nabla_\theta\mathcal L$
\State $G\gets\operatorname{Clip}_{1.0}(G)$; update $\theta\gets\operatorname{AdamW}(\theta,G)$ with the locked learning-rate schedule
\State \Return updated $\theta$
\end{algorithmic}
\end{algorithm}

\paragraph{Controlled removals and implementation conventions.}  Setting the token fold to one removes attention-projected token modulation; setting $w_\tau$ or $w_s$ to one removes temporal or spatial routing; and setting $\gamma=0$ removes the midpoint regularizer.  Panel~A of Table~\ref{tab:ablations} compares the full route with Uniform-$W$, which sets the route to one while retaining the midpoint term, and Shuffled-$W$, which applies a fresh joint value permutation before each loss evaluation.  The latter preserves the empirical route multiset while disrupting its coordinate assignment and original spatial--temporal structure; it is consequently a coordinate-shuffled route stress test, not an isolated correspondence test.  The residual-contrast route is evaluated in full precision with the frozen reference and detached before loss backward; paired endpoints share an anchor-noise draw; head, layer, and anchor averages precede bilinear resizing; and padding, SOS, and valid EOS are excluded before token normalization.  The supporting characterization applies to this shared per-minibatch route construction.

\paragraph{Scope and availability.}  \method{} is instantiated for noise-prediction latent diffusion with accessible cross-attention.  Extending it to flow-matching or other conditioning interfaces requires a compatible local residual field and dedicated validation.  Upon publication, we will release the training and evaluation code, locked run configurations, and final model checkpoints for ToPO.

\paragraph{Locked implementation configuration.}  The main SD-1.5 endpoint is a full-U-Net finetune from SFT-winners at $512^2$; the SDXL endpoint is a full-U-Net finetune from SDXL Base at $1024^2$.  Both use 500 updates of AdamW ($10^{-5}$, $(\beta_1,\beta_2)=(.9,.999)$, weight decay $.01$, $\epsilon=10^{-8}$), a 200-update linear warmup followed by a constant rate, gradient norm clipping at $1.0$, gradient checkpointing, and checkpoints every 100 updates; the reported endpoint is the final 500-update checkpoint.  SD-1.5 uses per-device batch 4 and SDXL uses 2; both use 32 accumulation microsteps over eight processes, giving effective batches of 1,024 and 512.  Master U-Net weights and the residual-contrast pass are fp32; policy autocast is fp16 for SD-1.5 and bf16 for SDXL, whose DPO loss is explicitly evaluated in fp32.  The residual-contrast pass captures every U-Net \texttt{attn2} module at the four anchors, averages heads, anchors, and layers before bilinear resizing, and freezes and detaches the resulting route before the policy backward pass.

\subsection{Immutable evaluation protocol}
\label{app:protocol}

Table~\ref{tab:eval-protocol} collects the benchmark-specific generation and scorer settings used for the locked endpoint reports.  Within any backbone--benchmark block, all compared rows use the same listed schedule; SD-1.5 and SDXL blocks are intentionally not pooled.  SD-1.5 uses the native VAE from \texttt{runwayml/stable-diffusion-v1-5} with the SFT-winners U-Net initialization, and SDXL uses the native VAE from \texttt{stabilityai/stable-diffusion-xl-base-1.0}; each sampling scheduler is reconstructed as a DDIM scheduler from that pipeline configuration.  The source records identify the endpoint and protocol per run, while the released evaluator manifest fixes the listed public metric implementations.
\begin{table*}[t]
\centering
\caption{\textbf{Locked automatic-evaluation protocol.}  Generation is benchmark-specific and held fixed within each backbone--benchmark comparison.  Main and matched-seed endpoints use the final 500-update checkpoint; the external checkpoint rows retain their released endpoint.  ``No extra negative prompt'' means that no additional negative-prompt argument is supplied to the generation call.}
\label{tab:eval-protocol}
\footnotesize
\setlength{\tabcolsep}{3.1pt}
\renewcommand{\arraystretch}{1.13}
\begin{tabular}{p{.19\textwidth}p{.18\textwidth}p{.31\textwidth}p{.24\textwidth}}
\toprule
Protocol & Prompt / image schedule & Generation & Scorers / frozen implementation \\
\midrule
Pick-a-Pic validation & 500 prompts; one image per prompt; seed $0$ & DDIMScheduler from native pipeline config; 30 steps, CFG $7.5$, no extra negative prompt; native VAE; $512^2$ (SD-1.5) / $1024^2$ (SDXL) & PickScore v1; HPS-v2 1.2.0 (v2.1 weights); ImageReward-v1.0; improved aesthetic predictor; CLIP ViT-L/14 \\
T2I-CompBench & 300 prompts per declared category; SD-1.5: 10 images/prompt with seeds 42--51; SDXL: 5 images/prompt with seeds 42--46 & DDIMScheduler from native pipeline config; 30 steps, CFG $7.5$, no extra negative prompt; native VAE; $512^2$ / $1024^2$ & Official T2I-CompBench evaluator, commit \texttt{1b709499}; reported SDXL categories exclude the 10-image complex evaluator \\
GenEval & 553 official prompts; four sequential samples/prompt from seed 42 & DDIMScheduler from native pipeline config; 50 steps, CFG $9.0$, no extra negative prompt; native VAE; $512^2$ / $1024^2$ & Official GenEval scoring with the mmdetection-v3.3.0 adapter \\
\bottomrule
\end{tabular}
\end{table*}

\paragraph{Prompt manifests.}  The local prompt serializations used for Pick-a-Pic validation, PartiPrompts, and HPDv2 are fingerprinted below; the immutable evaluator manifest records the complete files and scorer identifiers.  GenEval and T2I-CompBench use the official prompt files associated with the versions recorded in Table~\ref{tab:eval-protocol}; no prompt subsampling is applied to those official benchmark suites.
\begin{center}
\footnotesize
\renewcommand{\arraystretch}{1.02}
\begin{tabular}{@{}ll@{}}
\toprule
Prompt suite & SHA256 of local serialization \\
\midrule
Pick-a-Pic validation & \makecell[l]{\texttt{df6b63919b5ee3db7e5d7f3a448c933}\\[-1pt]\texttt{5cfd280d5f9099b7764b1add89c80ce0e}} \\
PartiPrompts & \makecell[l]{\texttt{41e6c66a6a396c8e2e6cc4f6e6eb256a}\\[-1pt]\texttt{bc23d18c0fda512160e5d7bc90f83184}} \\
HPDv2 & \makecell[l]{\texttt{1fc39ac6b38a80dd944dfe2f95af67e4385}\\[-1pt]\texttt{225d60d1bb1718f2b999169d02d8f}} \\
\bottomrule
\end{tabular}
\end{center}

\subsection{Matched seeds and route-permutation semantics}
\label{app:seed-stability}

\paragraph{Theory-level seed interpretation.}  A random training seed indexes the shuffled preference stream, diffusion noises and times, and any randomized implementation controls.  To separate this source of randomness from a performance claim, consider the projected-SGD-form current-batch update under two random streams $r$ and $r'$:
\begin{equation}
 \theta_{t+1}^{r}=\Pi_{\mathcal B}\!\left[\theta_t^{r}-\eta G(\theta_t^{r},Z_{t+1}^{r})\right].
 \label{eq:seed-sgd}
\end{equation}
Assume, locally, that $G(\cdot,z)$ is $L_G$-Lipschitz and define the same-parameter stream perturbation $\epsilon_{t+1}^{r,r'}=\norm{G(\theta_t^{r'},Z_{t+1}^{r})-G(\theta_t^{r'},Z_{t+1}^{r'})}_2$.  Nonexpansiveness of $\Pi_{\mathcal B}$ gives
\begin{equation}
 \norm{\theta_T^r-\theta_T^{r'}}_2
 \leq \eta\sum_{t=0}^{T-1}(1+\eta L_G)^{T-1-t}\epsilon_{t+1}^{r,r'}.
 \label{eq:seed-stability}
\end{equation}
The clipped update field gives $\epsilon_{t+1}^{r,r'}\leq2G_0=2$ in this surrogate recursion.  If an evaluation functional $m$ is locally $L_m$-Lipschitz, then $|m(\theta_T^r)-m(\theta_T^{r'})|\leq L_m\norm{\theta_T^r-\theta_T^{r'}}_2$.  Thus, normalization and clipping bound the effect of stochastic streams per update in the stated SGD-form model.  The bound alone does not identify an empirical method effect; Table~\ref{tab:seed-stability} supplies the corresponding matched three-seed comparison under the fixed 500-prompt protocol.

\paragraph{Exact Shuffled-$W$ intervention.}  The implementation forms a joint residual-weight tensor $W_{bfs}$ for minibatch index $b$, residual-band index $f$, and latent site $s$, then samples a fresh uniform permutation $\pi$ of all $B\!\times\!F\!\times\!|\mathcal S|$ entries at every ablated loss evaluation:
\begin{equation}
 W^{\mathrm{shuf}}_{bfs}=W_{\pi(b,f,s)}.
 \label{eq:shuffled-w}
\end{equation}
This leaves the global empirical multiset, mean, and total weight unchanged, but disrupts its batch, residual-band, spatial, and timestep-induced assignments and therefore the route's original local structure.  The residual-band index is part of the implementation's loss reduction, not an additional ToPO coordinate.  The reported Panel-A entry is one training run of this intervention.  A five-run extension would use five independent random-permutation streams, report the mean and standard deviation of each terminal metric, and test whether the route effect survives this intervention randomness; it is not equivalent to applying five static shuffles to one completed checkpoint.

\paragraph{Matched locked-control comparison.}
For each backbone and random seed, \method{} and Diffusion-DPO share the starting checkpoint, 500-update budget, preference-stream order, effective batch, optimizer, and learning-rate schedule.  Both arms use the same locked preference scale $\beta=2000$, and every reported seed endpoint is the final 500-update checkpoint rather than a per-seed selected checkpoint.  Seed identity therefore binds each ToPO--DPO row pair to the same locked training stream.  Table~\ref{tab:seed-stability} summarizes endpoint variation and seedwise differences; Table~\ref{tab:seed-stability-raw} exposes every underlying endpoint.  This is an equal-update comparison under a common fixed configuration, rather than a claim about an unreported equal-budget hyperparameter search.

\begin{table*}[t]
\centering
\caption{\textbf{Matched three-seed comparison with Diffusion-DPO.}  Mean $\pm$ sample standard deviation over three locked $\beta=2000$ ToPO--Diffusion-DPO training-stream pairs, each evaluated at the final 500-update endpoint on the fixed 500-prompt protocol.  $\Delta$ is computed seedwise as ToPO minus Diffusion-DPO.  Backbone blocks remain separate evaluation protocols.}
\label{tab:seed-stability}
\vspace{2pt}
{\footnotesize
\setlength{\tabcolsep}{2.75pt}
\renewcommand{\arraystretch}{1.18}
\begin{tabular}{llccccc}
\toprule
Backbone & Method / effect & PickScore $\uparrow$ & HPSv2 $\uparrow$ & ImageReward $\uparrow$ & Aesthetic $\uparrow$ & CLIP $\uparrow$ \\
\midrule
\multirow{3}{*}{SD-1.5}
& Diffusion-DPO & 20.930 $\pm$ .057 & .2587 $\pm$ .0006 & .250 $\pm$ .001 & 5.340 $\pm$ .012 & 27.609 $\pm$ .070 \\
& \method{} & 21.549 $\pm$ .059 & .2895 $\pm$ .0014 & .813 $\pm$ .015 & 5.359 $\pm$ .013 & 28.365 $\pm$ .032 \\
& \makecell[l]{\textbf{Paired $\Delta$}\\[-1pt]\textbf{(ToPO--DPO)}} & \textbf{+.620 $\pm$ .096} & \textbf{+.0307 $\pm$ .0019} & \textbf{+.563 $\pm$ .016} & \textbf{+.019 $\pm$ .006} & \textbf{+.756 $\pm$ .038} \\
\addlinespace[3pt]
\midrule
\multirow{3}{*}{SDXL}
& Diffusion-DPO & 22.285 $\pm$ .053 & .3003 $\pm$ .0007 & .932 $\pm$ .000 & 5.370 $\pm$ .012 & 28.992 $\pm$ .066 \\
& \method{} & 22.253 $\pm$ .030 & .3135 $\pm$ .0020 & 1.030 $\pm$ .020 & 5.355 $\pm$ .007 & 29.499 $\pm$ .109 \\
& \makecell[l]{\textbf{Paired $\Delta$}\\[-1pt]\textbf{(ToPO--DPO)}} & \textbf{-.032 $\pm$ .026} & \textbf{+.0132 $\pm$ .0016} & \textbf{+.098 $\pm$ .020} & \textbf{-.015 $\pm$ .012} & \textbf{+.507 $\pm$ .159} \\
\bottomrule
\end{tabular}}
\end{table*}

\begin{table*}[t]
\centering
\caption{\textbf{Seed-level endpoints and effects underlying Table~\ref{tab:seed-stability}.}  Each seed is a locked ToPO--Diffusion-DPO pair evaluated at 500 updates on the same fixed 500-prompt protocol.  For both backbones, ToPO recomputes and detaches its route from every current minibatch.  The $\Delta$ rows are ToPO minus Diffusion-DPO and make every reported pairwise direction explicit.}
\label{tab:seed-stability-raw}
\vspace{2pt}
{\footnotesize
\setlength{\tabcolsep}{3.15pt}
\renewcommand{\arraystretch}{1.04}
\begin{tabular}{llccccc}
\toprule
Backbone & Seed / method & PickScore $\uparrow$ & HPSv2 $\uparrow$ & ImageReward $\uparrow$ & Aesthetic $\uparrow$ & CLIP $\uparrow$ \\
\midrule
\multirow{9}{*}{SD-1.5}
& 1 / Diffusion-DPO & 20.895 & .2591 & .250 & 5.348 & 27.529 \\
& 1 / \method{} & 21.508 & .2897 & .814 & 5.372 & 28.328 \\
& \textbf{1 / Paired $\Delta$} & \textbf{+.613} & \textbf{+.0306} & \textbf{+.564} & \textbf{+.024} & \textbf{+.799} \\
& 2 / Diffusion-DPO & 20.996 & .2591 & .250 & 5.327 & 27.638 \\
& 2 / \method{} & 21.523 & .2880 & .797 & 5.347 & 28.378 \\
& \textbf{2 / Paired $\Delta$} & \textbf{+.527} & \textbf{+.0289} & \textbf{+.547} & \textbf{+.020} & \textbf{+.740} \\
& 3 / Diffusion-DPO & 20.898 & .2580 & .249 & 5.345 & 27.659 \\
& 3 / \method{} & 21.617 & .2907 & .827 & 5.358 & 28.388 \\
& \textbf{3 / Paired $\Delta$} & \textbf{+.719} & \textbf{+.0327} & \textbf{+.578} & \textbf{+.013} & \textbf{+.729} \\
\midrule
\multirow{9}{*}{SDXL}
& 1 / Diffusion-DPO & 22.224 & .3011 & .932 & 5.377 & 29.069 \\
& 1 / \method{} & 22.220 & .3147 & 1.034 & 5.350 & 29.419 \\
& \textbf{1 / Paired $\Delta$} & \textbf{-.004} & \textbf{+.0136} & \textbf{+.102} & \textbf{-.027} & \textbf{+.350} \\
& 2 / Diffusion-DPO & 22.314 & .2998 & .932 & 5.378 & 28.950 \\
& 2 / \method{} & 22.259 & .3112 & 1.009 & 5.363 & 29.455 \\
& \textbf{2 / Paired $\Delta$} & \textbf{-.055} & \textbf{+.0114} & \textbf{+.077} & \textbf{-.015} & \textbf{+.505} \\
& 3 / Diffusion-DPO & 22.318 & .2999 & .932 & 5.356 & 28.957 \\
& 3 / \method{} & 22.280 & .3145 & 1.048 & 5.352 & 29.624 \\
& \textbf{3 / Paired $\Delta$} & \textbf{-.038} & \textbf{+.0146} & \textbf{+.116} & \textbf{-.004} & \textbf{+.667} \\
\bottomrule
\end{tabular}}
\end{table*}

On SD-1.5, every paired stream favors \method{} on all five reported metrics.  The SDXL effect is more targeted: all three streams favor \method{} on HPSv2, ImageReward, and CLIP, whereas PickScore and the automatic Aesthetic score favor Diffusion-DPO on the three paired streams.  We consequently use the SDXL matched comparison to support stable gains in preference-reward and semantic-alignment metrics, rather than an all-metric automatic-quality claim.  The raw records make these seedwise signs inspectable; with $n=3$, we report sample standard deviations rather than significance claims.  The descriptive multi-method checkpoint tables remain a separate fixed-endpoint protocol.

\paragraph{Matched route-allocation controls.}
Table~\ref{tab:route-controls} compares route allocations within the SD-1.5 locked configuration.  Diffusion-DPO provides the no-route/no-midpoint reference.  Uniform-$W$ retains the midpoint term but replaces its route weights by one, while Shuffled-$W$ preserves the route-value distribution but jointly disrupts its coordinate correspondence and original local spatial--temporal structure through the declared fresh permutations.  Full \method{} is higher than Uniform-$W$ in all five reported mean metrics.  Relative to Shuffled-$W$, it is higher on PickScore, ImageReward, and CLIP, whereas Shuffled-$W$ has slightly higher HPSv2 and Aesthetic means.  Thus, these controls support a selective preference-reward and text-alignment advantage of the declared allocation; they do not support an all-metric dominance claim or a significance claim from three summary statistics alone.

\begin{table*}[t]
\centering
\caption{\textbf{Matched three-seed route-allocation controls on SD-1.5.}  All rows use the locked 500-update configuration and are reported as mean $\pm$ sample standard deviation over three matched training seeds.  Uniform-$W$ sets the route weights to one while retaining the midpoint term; Shuffled-$W$ retains the route-value distribution while jointly disrupting its coordinate correspondence and original local structure with fresh permutations.  Shading marks the largest and second-largest mean in each column, not statistical significance.}
\label{tab:route-controls}
\vspace{2pt}
{\footnotesize
\setlength{\tabcolsep}{4.1pt}
\renewcommand{\arraystretch}{1.10}
\begin{tabular}{lccccc}
\toprule
Configuration & PickScore $\uparrow$ & HPSv2 $\uparrow$ & ImageReward $\uparrow$ & Aesthetic $\uparrow$ & CLIP $\uparrow$ \\
\midrule
Diffusion-DPO & 20.930 $\pm$ .057 & .2587 $\pm$ .0006 & .250 $\pm$ .001 & 5.340 $\pm$ .012 & 27.609 $\pm$ .070 \\
Uniform-$W$ & 21.226 $\pm$ .062 & .2863 $\pm$ .0011 & \runnerup{.781 $\pm$ .011} & 5.347 $\pm$ .014 & \runnerup{28.211 $\pm$ .055} \\
Shuffled-$W$ & \runnerup{21.343 $\pm$ .056} & \best{.2906 $\pm$ .0010} & .758 $\pm$ .013 & \best{5.366 $\pm$ .012} & 28.204 $\pm$ .048 \\
\method{} (full) & \best{21.549 $\pm$ .059} & \runnerup{.2895 $\pm$ .0014} & \best{.813 $\pm$ .015} & \runnerup{5.359 $\pm$ .013} & \best{28.365 $\pm$ .032} \\
\bottomrule
\end{tabular}}
\end{table*}

\subsection{Adaptive detached attributor: martingale difference and Lyapunov recursion}
\label{app:adaptive-proof}

\begin{lemma}[Adaptive detached-attributor martingale property]
\label{lem:mds}
Under (U1)--(U2), with $\xi_{t+1}=G(\theta_t,Z_{t+1})-h(\theta_t)$,
\begin{equation}
 \E[\xi_{t+1}\mid\mathcal F_t]=0,
 \qquad
 \E[\norm{\xi_{t+1}}_2^2\mid\mathcal F_t]\leq G_0^2.
 \label{eq:mds}
\end{equation}
\end{lemma}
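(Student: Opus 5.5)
The argument conditions on $\mathcal F_t$ and uses two facts. First, $\theta_t$ is $\mathcal F_t$-measurable. Second, by (U1), $Z_{t+1}$ has conditional law $P_0$ given $\mathcal F_t$. The frozen-reference route $W_t=\mathcal W_{\mathrm{ref}}(Z_{t+1})$ is recomputed from that same draw and is detached, so it adds no randomness beyond $Z_{t+1}$. The post-clipping field $G(\theta_t,Z_{t+1})$ is therefore a fixed measurable function of $(\theta_t,Z_{t+1})$, and measurability is supplied by (U2). Both claims in \eqref{eq:mds} then follow from the freezing (substitution) lemma for conditional expectations.

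\textbf{Step 1: zero conditional mean.} The function $(\theta,z)\mapsto G(\theta,z)$ is jointly measurable and bounded by $G_0$, so it is integrable, and $h(\theta)=\E_{P_0}[G(\theta,Z)]$ is a measurable function of $\theta$. The freezing lemma, applied with $\theta_t$ $\mathcal F_t$-measurable and $Z_{t+1}\mid\mathcal F_t\sim P_0$, gives
\begin{equation*}
 \E[G(\theta_t,Z_{t+1})\mid\mathcal F_t]=h(\theta_t)
\end{equation*}
almost surely. Since $h(\theta_t)$ is $\mathcal F_t$-measurable, subtracting it yields $\E[\xi_{t+1}\mid\mathcal F_t]=0$.

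\textbf{Step 2: conditional second moment.} Expand the square and use Step~1:
\begin{equation*}
 \E[\norm{\xi_{t+1}}_2^2\mid\mathcal F_t]=\E[\norm{G(\theta_t,Z_{t+1})}_2^2\mid\mathcal F_t]-\norm{h(\theta_t)}_2^2\leq \E[\norm{G(\theta_t,Z_{t+1})}_2^2\mid\mathcal F_t].
\end{equation*}
This is the conditional variance identity. By (U2) the right-hand side is at most $G_0^2$. The cross term vanishes because $h(\theta_t)$ can be pulled out of the conditional expectation.

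\textbf{Main obstacle.} The only delicate step is justifying the substitution in Step~1. One has to verify that the detached route $\mathcal W_{\mathrm{ref}}(Z_{t+1})$ enters $G$ only as a measurable function of the same draw, with no dependence on $\mathcal F_t$ beyond $\theta_t$. This holds because the reference is frozen and the stop-gradient makes $W_t$ a constant in the differentiation. One also has to check joint measurability of $G$ in $(\theta,z)$. I would state this explicitly as the reading of (U2). Given that, boundedness removes any integrability concerns, and the conclusion is the standard martingale-difference property.
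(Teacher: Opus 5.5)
Your proposal is correct and matches the paper's proof. You condition on $\mathcal F_t$ and use (U1), with the detached route absorbed into the measurable map $z\mapsto G(\theta_t,z)$, to get $\E[G(\theta_t,Z_{t+1})\mid\mathcal F_t]=h(\theta_t)$; you then expand the square to obtain $\E[\norm{G_t}_2^2\mid\mathcal F_t]-\norm{h(\theta_t)}_2^2\le G_0^2$ via (U2), and your explicit freezing-lemma and joint-measurability remarks only make rigorous what the paper states informally (the cross term that ``vanishes'' is the one from writing $G_t=\xi_{t+1}+h(\theta_t)$; expanding $\norm{G_t-h(\theta_t)}_2^2$ instead gives $-2\norm{h(\theta_t)}_2^2$, with the same result).
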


\begin{proof}
Conditional on $\mathcal F_t$, the iterate $\theta_t$ is fixed.  By (U1), the next draw has distribution $P_0$.  Crucially, the measurable map $z\mapsto G(\theta_t,z)$ already includes recomputation of the frozen-reference detached field $\mathcal W_{\mathrm{ref}}(z)$ from that same draw.  Hence
\begin{equation}
 \E[G(\theta_t,Z_{t+1})\mid\mathcal F_t]
 =\E_{Z\sim P_0}[G(\theta_t,Z)]=h(\theta_t),
 \label{eq:conditional-field}
\end{equation}
which proves the first equality in \eqref{eq:mds}.  No independence between $W_t$ and its batch, preference sign, or local logit is used.

For the second equality, expand the conditional squared norm and use \eqref{eq:conditional-field}:
\begin{align}
 \E[\norm{\xi_{t+1}}_2^2\mid\mathcal F_t]
 &=\E[\norm{G_t-h(\theta_t)}_2^2\mid\mathcal F_t] \notag\\
 &=\E[\norm{G_t}_2^2\mid\mathcal F_t]
 -2\inner{\E[G_t\mid\mathcal F_t]}{h(\theta_t)}
 +\norm{h(\theta_t)}_2^2 \notag\\
 &=\E[\norm{G_t}_2^2\mid\mathcal F_t]-\norm{h(\theta_t)}_2^2
 \leq G_0^2,
 \label{eq:mds-variance}
\end{align}
where the final inequality is (U2).
\end{proof}

\begin{lemma}[A summable-drift almost-supermartingale criterion]
\label{lem:as}
Let $V_t\geq0$ be adapted and integrable.  If deterministic nonnegative sequences $b_t,c_t$ obey $\sum_t c_t<\infty$ and
\begin{equation}
 \E[V_{t+1}\mid\mathcal F_t]\leq V_t-b_t+c_t,
 \label{eq:as-recursion}
\end{equation}
then $V_t$ converges almost surely and $\sum_t b_t<\infty$ almost surely.
\end{lemma}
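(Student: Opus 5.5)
This is the Robbins--Siegmund almost-supermartingale theorem with deterministic drifts. My plan is to reduce it to the martingale convergence theorem by compensating the summable drift. Define the compensated process
\begin{equation*}
 U_t = V_t + \sum_{j\geq t} c_j + \sum_{j<t} b_j .
\end{equation*}
This is well defined and integrable because $\sum_j c_j<\infty$ and each $b_j$ is a finite deterministic number. Since $b_t,c_t$ are deterministic, $\E[U_{t+1}\mid\mathcal F_t]$ is obtained by replacing $V_{t+1}$ with $\E[V_{t+1}\mid\mathcal F_t]$. Applying \eqref{eq:as-recursion} then gives
\begin{equation*}
 \E[U_{t+1}\mid\mathcal F_t]\leq V_t-b_t+c_t+\sum_{j\geq t+1}c_j+\sum_{j\leq t}b_j=U_t .
\end{equation*}
So $U_t$ is a supermartingale. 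It is nonnegative, because $V_t\geq0$ and $b_j,c_j\geq0$.

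By Doob's convergence theorem for nonnegative supermartingales, $U_t$ converges almost surely to a finite limit $U_\infty$, with $\E U_\infty\leq \E U_0<\infty$.

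Next I use nonnegativity term by term. Because $V_t\geq0$ and the tail sum is nonnegative, we have $\sum_{j<t}b_j\leq U_t$. The left side is nondecreasing in $t$ and the right side converges almost surely. Hence the partial sums are bounded, and $\sum_t b_t\leq \sup_t U_t<\infty$ almost surely. Since the $b_t$ are deterministic, this in fact holds surely whenever the event is nonempty.

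For convergence of $V_t$ itself, write
\begin{equation*}
 V_t = U_t - \sum_{j\geq t}c_j - \sum_{j<t}b_j .
\end{equation*}
The three pieces behave as follows. $U_t$ converges almost surely. The tail sum $\sum_{j\geq t}c_j$ tends to $0$ because $\sum_j c_j$ is summable. The partial sums $\sum_{j<t}b_j$ converge to the finite limit just established. Therefore $V_t$ converges almost surely.

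The only delicate point is the order of the argument. We need $\sum_j b_j<\infty$ before we can conclude that $V_t$ converges. This is why the compensated process includes the $b$-partial sums and not only the $c$-tail. With that order respected, the main obstacle reduces to checking that $U_t$ is an integrable nonnegative supermartingale, which the display above verifies.

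If the $b_t$ were allowed to be random adapted sequences, the same construction would still work. The only change is that $\sum_{j<t}b_j$ becomes $\mathcal F_{t-1}$-measurable, so it can be pulled out of the conditional expectation. The paper's deterministic assumption makes this step trivial.
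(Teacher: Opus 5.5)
Your proof is correct; it differs from the paper's only in how the $b$-terms are handled.

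\textbf{The paper's route.} It compensates only the drift remainder, setting $Y_t=V_t+\sum_{j\ge t}c_j$. This is already a nonnegative supermartingale with $\E[Y_{t+1}\mid\mathcal F_t]\le Y_t-b_t$, so $V_t=Y_t-\sum_{j\ge t}c_j$ converges immediately. Summability of $b_t$ is obtained separately: telescoping expectations gives $\E\sum_{t<T}b_t\le\E Y_0$, and monotone convergence finishes.

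\textbf{Your route.} You fold the partial sums $\sum_{j<t}b_j$ into the supermartingale. You then read off $\sum_t b_t<\infty$ pathwise from $\sum_{j<t}b_j\le U_t$ and the almost-sure boundedness of the convergent $U_t$.

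\textbf{What each buys.} The paper's version gives an explicit bound $\sum_t b_t\le\E V_0+\sum_t c_t$. It also never encounters the ordering issue you flag; that issue is created by placing $b$ inside $U_t$ and is not intrinsic to the lemma. Your version instead derives summability directly from the sample paths of the supermartingale limit.

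\textbf{On adapted $b_t$.} Keep your closing remark. The paper applies this lemma in the proof of Proposition~\ref{thm:adaptive} with $b_t=2\mu\eta_{t+1}\norm{e_t}_2^2$, which is random, so the stated deterministic hypothesis is not literally met there. Both arguments extend to nonnegative adapted $b_t$. In yours, integrability of $U_t$ then needs a word: the recursion gives $b_t\le V_t+c_t$, because $\E[V_{t+1}\mid\mathcal F_t]\ge0$.
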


\begin{proof}
Let $C_t=\sum_{j=t}^{\infty}c_j$ and $Y_t=V_t+C_t$.  Since $C_t=c_t+C_{t+1}$, \eqref{eq:as-recursion} gives
\begin{equation}
 \E[Y_{t+1}\mid\mathcal F_t]\leq Y_t-b_t.
 \label{eq:supermartingale-drift}
\end{equation}
Thus $Y_t$ is a nonnegative supermartingale.  The nonnegative supermartingale convergence theorem yields an almost-sure finite limit for $Y_t$.  Taking expectations in the telescoping form of \eqref{eq:supermartingale-drift} gives $\E[\sum_{t=0}^{T-1}b_t]\leq\E[Y_0]$ for every $T$; monotone convergence therefore gives $\sum_t b_t<\infty$ almost surely.  Finally, $C_t\to0$, so $V_t=Y_t-C_t$ also converges.  This is the deterministic-remainder specialization used in the Robbins--Siegmund argument \citep{robbins1971convergence}.
\end{proof}

\begin{proof}[Proof of Proposition~\ref{thm:adaptive}]
Let $e_t=\theta_t-\theta^\dagger$.  Projection nonexpansiveness and $\theta^\dagger\in\mathcal B$ imply
\begin{align}
 \norm{e_{t+1}}_2^2
 &\leq \norm{e_t-\eta_{t+1}G_t}_2^2 \notag\\
 &=\norm{e_t}_2^2-2\eta_{t+1}\inner{e_t}{G_t}
 +\eta_{t+1}^2\norm{G_t}_2^2.
 \label{eq:lyapunov-expand}
\end{align}
Taking conditional expectations, using Lemma~\ref{lem:mds}, then applying (U2) and (U3), gives
\begin{align}
 \E[\norm{e_{t+1}}_2^2\mid\mathcal F_t]
 &\leq \norm{e_t}_2^2-2\eta_{t+1}\inner{e_t}{h(\theta_t)}
 +\eta_{t+1}^2G_0^2 \notag\\
 &\leq (1-2\mu\eta_{t+1})\norm{e_t}_2^2
 +\eta_{t+1}^2G_0^2.
 \label{eq:lyapunov-recursion}
\end{align}
Apply Lemma~\ref{lem:as} to $V_t=\norm{e_t}_2^2$, $b_t=2\mu\eta_{t+1}\norm{e_t}_2^2$, and $c_t=\eta_{t+1}^2G_0^2$.  Condition (U4) makes $\sum_t c_t$ finite.  Hence $V_t$ has an almost-sure limit and $\sum_t\eta_{t+1}V_t<\infty$ almost surely.

It remains to identify the limit.  On an event where $V_t\to L>0$, there is a random $T_0$ such that $V_t\geq L/2$ for all $t\geq T_0$.  Then $\sum_{t\geq T_0}\eta_{t+1}V_t\geq(L/2)\sum_{t\geq T_0}\eta_{t+1}=\infty$, contradicting the preceding summability.  Therefore $L=0$ almost surely, which proves $\theta_t\to\theta^\dagger$ for the projected SGD-form recursion.
\end{proof}

\subsection{Proofs of the gradient-energy and constant-step statements}
\label{app:energy-proof}

\begin{proof}[Proof of Proposition~\ref{thm:energy}]
For a random vector with conditional mean $h(\theta_t)$,
\begin{equation}
 \operatorname{Var}(G_t\mid\mathcal F_t)
 =\E[\norm{G_t}_2^2\mid\mathcal F_t]-\norm{h(\theta_t)}_2^2.
 \label{eq:conditional-variance-identity}
\end{equation}
By definition of clipping, $\norm{G_t}_2\leq\norm{\widetilde G_t}_2$.  Under the theorem's representation,
\begin{align}
 \norm{\widetilde G_t}_2
 &=\norm{R D(W_t)\mathbf u_t}_2 \notag\\
 &\leq\norm R_{\mathrm{op}}\norm{D(W_t)}_{\mathrm{op}}\norm{\mathbf u_t}_2
 =\norm R_{\mathrm{op}}\norm{W_t}_\infty\norm{\mathbf u_t}_2.
 \label{eq:operator-bound}
\end{align}
Squaring \eqref{eq:operator-bound}, taking conditional expectations, and substituting into \eqref{eq:conditional-variance-identity} proves \eqref{eq:energy-envelope}.  The result is a one-update route-scale bound, stated independently of cross-method comparisons.
\end{proof}

\begin{proof}[Proof of Proposition~\ref{thm:constant-step}]
Set $a_t=\E\norm{\theta_t-\theta^\dagger}_2^2$.  With $\eta_t\equiv\eta$, take expectations in \eqref{eq:lyapunov-recursion} to obtain
\begin{equation}
 a_{t+1}\leq q a_t+\eta^2G_0^2,
 \qquad q=1-2\mu\eta\in(0,1).
 \label{eq:constant-recursion}
\end{equation}
Iterating \eqref{eq:constant-recursion} gives
\begin{align}
 a_T
 &\leq q^Ta_0+\eta^2G_0^2\sum_{j=0}^{T-1}q^j \notag\\
 &=q^Ta_0+\eta^2G_0^2\frac{1-q^T}{1-q} \notag\\
 &= (1-2\mu\eta)^Ta_0+
 \frac{\eta G_0^2}{2\mu}\left[1-(1-2\mu\eta)^T\right] \notag\\
 &\leq (1-2\mu\eta)^Ta_0+\frac{\eta G_0^2}{2\mu},
 \label{eq:constant-unrolled}
\end{align}
which is \eqref{eq:constant-step}.  The nonvanishing second term is why the conclusion is a noise neighborhood.  AdamW has additional first- and second-moment state variables, so it is not represented by \eqref{eq:constant-recursion}.
\end{proof}

\subsection{Attention-weighted residual-contrast consistency}
\label{app:frozen-proof}

The frozen token pathway is a self-normalized attention-weighted average of the spatial residual contrast.  For a held-out draw $Z_i$, define
\begin{equation}
 X_{i,k}=\sum_{s\in\mathcal S}
 \bar A^+_{k,\bar\theta}(s;Z_i)M_{\bar\theta}(Z_i,s),\qquad
 Y_{i,k}=\sum_{s\in\mathcal S}\bar A^+_{k,\bar\theta}(s;Z_i),
 \qquad
 \widehat w^{\mathrm{data}}_{k,n}=\frac{\sum_{i=1}^nX_{i,k}}{\sum_{i=1}^nY_{i,k}}.
 \label{eq:ratio-estimator}
\end{equation}
The denominator retains the example-dependent mass of winner-side token attention.  The following lemma supplies the statistical core of the result.

\begin{lemma}[Ratio law of large numbers]
\label{lem:ratio-lln}
Under (F1), (F2), and (F4),
\begin{equation}
 \widehat w^{\mathrm{data}}_{k,n}\xrightarrow{p}
 \frac{\E[X_{1,k}]}{\E[Y_{1,k}]}.
 \label{eq:ratio-lln-limit}
\end{equation}
\end{lemma}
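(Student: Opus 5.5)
The plan is to apply the weak law of large numbers separately to the numerator and the denominator sample means and then combine them with the continuous mapping theorem. First, normalize by $n$ and write $\widehat w^{\mathrm{data}}_{k,n}=\bar X_{n,k}/\bar Y_{n,k}$, where $\bar X_{n,k}=n^{-1}\sum_{i\le n}X_{i,k}$ and $\bar Y_{n,k}=n^{-1}\sum_{i\le n}Y_{i,k}$. By (F1), the checkpoint $\bar\theta$ and all attribution settings are fixed before sampling. The maps $Z\mapsto X_{1,k}(Z)$ and $Z\mapsto Y_{1,k}(Z)$ are therefore deterministic measurable functions of a single draw, so $(X_{i,k},Y_{i,k})_{i\ge1}$ is an i.i.d. sequence. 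Nonnegativity follows from (F4) together with $M_{\bar\theta}\ge0$, the latter holding because $M_{\bar\theta}$ is an $\operatorname{Norm}_1$ of the nonnegative contrast $d_s$.

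Second, (F2) gives $\E|X_{1,k}|<\infty$ and $\E|Y_{1,k}|<\infty$. (F4) also gives the finite denominator moment directly, and by (F3) the bound is at most $|\mathcal S|$. Khinchin's weak law then yields $\bar X_{n,k}\xrightarrow{p}\E[X_{1,k}]$ and $\bar Y_{n,k}\xrightarrow{p}\E[Y_{1,k}]$. Because both limits are constants, joint convergence in probability of the pair follows immediately, with no further coupling argument.

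Third, apply the continuous mapping theorem to $g(x,y)=x/y$. This map is continuous on $\{y>0\}$, and (F4) places the limit point in that set because $\E[Y_{1,k}]=\E[b_k(Z)]>0$. The one technical point is that $\bar Y_{n,k}$ can equal zero with positive probability for finite $n$, so the ratio is undefined there. I would handle this by assigning any fixed value on the event $\{\bar Y_{n,k}=0\}$. That event satisfies $\Pr(\bar Y_{n,k}\le \E[Y_{1,k}]/2)\to0$ by the previous step, so the convention does not affect convergence in probability. Continuous mapping, or an explicit $\varepsilon$--$\delta$ argument on the event $\{\bar Y_{n,k}>\E[Y_{1,k}]/2\}$, then gives \eqref{eq:ratio-lln-limit}.

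The main obstacle is this denominator-positivity and measurability bookkeeping rather than any deep estimate. Once the lemma is in hand, Proposition~\ref{thm:token-contrast} follows by observing that $\E[X_{1,k}]/\E[Y_{1,k}]$ is exactly the definition of $\rho_k^{\mathrm{disc}}$ in \eqref{eq:token-contrast-target}.
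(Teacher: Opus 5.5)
Your proposal is correct and follows the paper's proof: the weak law of large numbers applied to the i.i.d., integrable pairs $(X_{i,k},Y_{i,k})$ from (F1)--(F2), followed by the continuous-mapping theorem for $(x,y)\mapsto x/y$ at a limit with $\E[Y_{1,k}]>0$ from (F4). Your explicit handling of the event $\{\bar Y_{n,k}=0\}$ is a detail the paper leaves implicit. Your appeal to (F3) is unnecessary, since (F2) and (F4) already supply the integrability the lemma needs.
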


\begin{proof}
The pairs $(X_{i,k},Y_{i,k})$ are i.i.d. and integrable by (F1)--(F2).  The weak law of large numbers yields
\begin{equation}
 \left(\frac1n\sum_{i=1}^nX_{i,k},
 \frac1n\sum_{i=1}^nY_{i,k}\right)
 \xrightarrow{p}(\E[X_{1,k}],\E[Y_{1,k}]).
 \label{eq:joint-wlln}
\end{equation}
Condition (F4) gives $\E[Y_{1,k}]>0$.  The map $(x,y)\mapsto x/y$ is continuous at every point with $y>0$, so the continuous-mapping theorem applied to \eqref{eq:joint-wlln} proves \eqref{eq:ratio-lln-limit}.
\end{proof}

\begin{proof}[Proof of Proposition~\ref{thm:token-contrast}]
By the definitions in \eqref{eq:ratio-estimator},
\begin{equation}
 \frac{\E[X_{1,k}]}{\E[Y_{1,k}]}
 =\frac{\E_{Z\sim P_0}\!\left[\sum_{s\in\mathcal S}
 \bar A^+_{k,\bar\theta}(s;Z)M_{\bar\theta}(Z,s)\right]}
 {\E_{Z\sim P_0}\!\left[\sum_{s\in\mathcal S}\bar A^+_{k,\bar\theta}(s;Z)\right]}
 =\rho_k^{\mathrm{disc}}.
 \label{eq:token-contrast-proof}
\end{equation}
Lemma~\ref{lem:ratio-lln} now gives $\widehat w^{\mathrm{data}}_{k,n}\xrightarrow{p}\rho_k^{\mathrm{disc}}$, completing the proof.
\end{proof}

\subsection{Proof of the beta--W scale envelope}
\label{app:scale-proof}

\begin{proof}[Proof of Proposition~\ref{prop:beta-w-scale}]
For a scalar local contribution, $|W_j|\leq\norm W_\infty$ directly gives
\begin{equation}
 |\beta W_j\delta_j|\leq\beta\norm W_\infty|\delta_j|.
 \label{eq:scalar-beta-w}
\end{equation}
For an aggregate logit, apply the triangle inequality coordinatewise:
\begin{align}
 \left|\beta\inner{W}{\boldsymbol\delta}\right|
 &=\left|\beta\sum_jW_j\delta_j\right| \notag\\
 &\leq\beta\sum_j|W_j|\,|\delta_j|
 \leq\beta\norm W_\infty\sum_j|\delta_j|
 =\beta\norm W_\infty\norm{\boldsymbol\delta}_1.
 \label{eq:vector-beta-w}
\end{align}
Taking expectation of the nonnegative factor $\beta\norm W_\infty$ yields the monitoring scale $\beta\E\norm W_\infty$.  The envelope complements the observed finite $\beta$ sweep with an explicit route-scale quantity.
\end{proof}

\paragraph{Empirical scale probe.} At each checkpoint, the $\beta$--$W$ probe correlates batch-max active $\beta\norm W_\infty$ with the pre-clipping gradient $\ell_2$ norm over 32 batches (64 held-out samples).  The probe makes the monitored relation between routed-logit scale and update magnitude directly inspectable.

\begin{table}[!h]
\caption{Descriptive $\beta$--$W$ scale probe (32 paired batches per checkpoint).}
\label{tab:app-beta-probe}
\centering
\small
\setlength{\tabcolsep}{6.0pt}
\renewcommand{\arraystretch}{1.08}
\begin{tabular}{rcc}
\toprule
Step & Pearson $r$ & $p$ value\\
\midrule
100 & .660 & $3.98\!\times\!10^{-5}$\\
200 & .539 & $1.47\!\times\!10^{-3}$\\
300 & .728 & $2.34\!\times\!10^{-6}$\\
400 & .731 & $2.01\!\times\!10^{-6}$\\
500 & .671 & $2.65\!\times\!10^{-5}$\\
\bottomrule
\end{tabular}
\end{table}

\paragraph{Additional evaluation protocol.} The formal main checkpoint comparison uses its audited 500-prompt records, one locked endpoint per method, and paired bootstrap artifacts; Table~\ref{tab:seed-stability} separately reports three matched ToPO--Diffusion-DPO training streams under the same prompt protocol.  The residual-contrast proposition concerns a frozen-checkpoint, held-out i.i.d. evaluation construction.  The controlled token-functional diagnostic uses a fixed 300-case controlled suite, reports its five word classes separately, and applies a five-test Holm correction.  The completed per-axis ablations and finite $\beta$ sweep are displayed in Table~\ref{tab:ablations}.

\section{Additional Experimental Results}
\label{app:full-results}

\subsection{Closest local-allocation methods}
\label{app:closest-work}
\begin{table*}[t]
\centering
\caption{Closest preference-allocation interfaces.  The table separates methods that retain an offline winner--loser pair from methods that construct local supervision, change the objective, or learn a trajectory reward.  It is an interface map, not a numerical leaderboard.}
\label{tab:closest-work}
\small
\renewcommand{\arraystretch}{1.12}
\setlength{\tabcolsep}{4.2pt}
\begin{tabular}{@{}p{.19\textwidth}p{.19\textwidth}p{.28\textwidth}p{.25\textwidth}@{}}
\toprule
Method family & Pair / data interface & Local signal or objective & Coordinate allocation \\
\midrule
Diffusion-DPO \citep{wallace2024diffusiondpo} & supplied offline pair & reference-relative preference objective & uniform local reduction \\
DSPO \citep{zhu2025dspo} & supplied offline pair & score-preference objective & global objective change \\
$\alpha$-DPO / Semi-DPO \citep{li2026alphadpo,liu2026semidpo} & supplied or pseudo-labeled pair & pair robustness / consistency handling & pair or sample weighting \\
PatchDPO / IAPO \citep{huang2025patchdpo,sun2026iapo} & retained or constructed local preference & patch quality or detector/VLM instances & local supervision / instances \\
SPO / TAPO \citep{liang2025spo,xian2026tapo} & supplied pair or trajectory samples & step supervision or learned latent reward & timestep / trajectory \\
OSPO \citep{oh2026ospo} & self-constructed pair data & attention masks and object-weighted SimPO & object regions \\
\method{} & supplied offline pair & frozen residual contrast and cross-attention & detached spatial/time route with token-conditioned spatial modulation \\
\bottomrule
\end{tabular}
\end{table*}

\paragraph{Protocol and reporting.}  The 500-prompt primary checkpoint protocol uses one locked endpoint per method, and every external row identifies its author-released endpoint.  Appendix~\ref{app:seed-stability} separately reports three matched ToPO--Diffusion-DPO training streams under the same prompt protocol.  T2I-CompBench reports only the six shared category labels; CB Avg.$^{\dagger}$ and GE Avg.$^{\ddagger}$ are declared unweighted means, not official scores.  Backbone blocks are separate protocols.  PartiPrompts and HPDv2 are held-out evaluations with 1,632 and 3,200 prompts, respectively, without creating a pooled claim.  Mint/bold and mist-blue/italics are local reading aids, not significance marks.

\paragraph{Aggregate human-study record and inferential scope.}  The released human-study artifact contains only the six aggregate triples $(n_{\mathrm{ToPO}},n_{\mathrm{tie}},n_{\mathrm{comp}})$, each summing to 300.  It therefore identifies descriptive shares, for example $\widehat p_{\mathrm{ToPO},c,d}=n_{\mathrm{ToPO},c,d}/300$ for comparison $c$ and question $d$, but it does not identify dependence induced by repeated participants or prompts.  We consequently report no participant- or prompt-clustered interval from these counts.

\newcommand{\TopoExpandedCompBenchTable}{%
\caption{Expanded T2I-CompBench results.}
\label{tab:app-compbench}
{\scriptsize
\setlength{\tabcolsep}{3.0pt}
\renewcommand{\arraystretch}{0.94}
\resizebox{\textwidth}{!}{%
\begin{tabular}{llccccccc}
\toprule
Backbone & Method & CB Avg.$^{\dagger}$ $\uparrow$ & Color $\uparrow$ & Shape $\uparrow$ & Texture $\uparrow$ & Count $\uparrow$ & Spatial $\uparrow$ & Non-spatial $\uparrow$\\
\midrule
\multirow{6}{*}{SD-1.5}
& Base & .3393 & .3750 & .3715 & .4201 & .4514 & .1068 & .3109\\
& Diffusion-DPO & .3521 & .4028 & .3842 & .4243 & .4591 & .1309 & .3114\\
& SFT-winners & .3870 & .4761 & .4310 & .4773 & .4677 & .1572 & .3129\\
& KTO & .3887 & .4875 & .4300 & .4729 & .4748 & .1538 & .3131\\
& InPO & \runnerup{.3984} & \runnerup{.4967} & \runnerup{.4389} & \runnerup{.4966} & \runnerup{.4785} & \runnerup{.1663} & \runnerup{.3135}\\
& \method{} & \best{.4123} & \best{.5242} & \best{.4579} & \best{.5175} & \best{.4793} & \best{.1816} & \best{.3136}\\
\midrule
\multirow{6}{*}{SDXL}
& Base & .4533 & .6277 & .4878 & .5587 & .5114 & .2179 & .3161\\
& Diffusion-DPO & \runnerup{.4974} & \runnerup{.7177} & \runnerup{.5378} & \runnerup{.6436} & \best{.5402} & \best{.2262} & .3190\\
& SFT-winners & .4436 & .6089 & .4955 & .5437 & .5046 & .1888 & \best{.3202}\\
& InPO & .4731 & .6731 & .5231 & .5902 & .5176 & .2170 & .3174\\
& MaPO & .4691 & .6429 & .5197 & .5808 & .5302 & .2235 & .3175\\
& \method{} & \best{.4975} & \best{.7190} & \best{.5398} & \best{.6441} & \runnerup{.5369} & \runnerup{.2250} & \runnerup{.3201}\\
\bottomrule
\end{tabular}}}
\vspace{3pt}
\begin{minipage}{.94\textwidth}
\footnotesize\textbf{Reading the CompBench breakdown.} Table~\ref{tab:app-compbench} separates unary visual attributes (color, shape, and texture) from cardinality and relational conditions. On SD-1.5, \method{} has the largest displayed point estimate in every shown component. On SDXL, it has the largest displayed CB average and the three unary-attribute columns, while Diffusion-DPO is largest on count and spatial and SFT-winners on non-spatial. The breakdown resolves the aggregate into condition types without treating one category as a universal endpoint or comparing across backbones.
\end{minipage}
}

\newcommand{\TopoExpandedGenEvalTable}{%
\caption{Expanded GenEval results.  GE Avg.$^{\ddagger}$ is the unweighted mean of the six displayed GenEval components.}
\label{tab:app-geneval}
{\footnotesize
\setlength{\tabcolsep}{2.6pt}
\renewcommand{\arraystretch}{1.06}
\resizebox{\textwidth}{!}{%
\begin{tabular}{llccccccc}
\toprule
Backbone & Method & GE Avg.$^{\ddagger}$ $\uparrow$ & Single $\uparrow$ & Two $\uparrow$ & Count $\uparrow$ & Colors $\uparrow$ & Position $\uparrow$ & Color-attr. $\uparrow$\\
\midrule
\multirow{6}{*}{SD-1.5}
& Base & .4378 & .9781 & .4116 & .3719 & .7553 & .0500 & .0600\\
& Diffusion-DPO & .4591 & .9812 & .4343 & .3938 & .7926 & .0625 & .0900\\
& SFT-winners & .4878 & \best{.9875} & \runnerup{.5404} & .4125 & .8138 & \runnerup{.0725} & .1000\\
& KTO & .4879 & \best{.9875} & .5202 & \runnerup{.4375} & \runnerup{.8271} & .0550 & .1000\\
& InPO & \runnerup{.4925} & \runnerup{.9844} & .5379 & .4281 & .8245 & \runnerup{.0725} & \best{.1075}\\
& \method{} & \best{.5092} & \best{.9875} & \best{.5480} & \best{.4969} & \best{.8404} & \best{.0775} & \runnerup{.1050}\\
\midrule
\multirow{6}{*}{SDXL}
& Base & .5783 & \runnerup{.9906} & .7955 & .4188 & .8723 & \runnerup{.1275} & .2650\\
& Diffusion-DPO & \runnerup{.5917} & .9875 & \best{.8434} & .4500 & .8644 & .1075 & \best{.2975}\\
& SFT-winners & .5464 & \runnerup{.9906} & .7273 & .3562 & .8670 & .1250 & .2125\\
& InPO & \runnerup{.5827} & .9781 & .8106 & \runnerup{.4750} & .8750 & .1125 & .2450\\
& MaPO & .5687 & .9875 & .7904 & .3812 & \runnerup{.8803} & .1225 & .2500\\
& \method{} & \best{.6168} & \best{1.0000} & \runnerup{.8409} & \best{.4920} & \best{.9069} & \best{.1675} & \runnerup{.2935}\\
\bottomrule
\end{tabular}}}%
}

\paragraph{Held-out checks and attribution diagnostic.}
Tables~\ref{tab:app-parti} and~\ref{tab:app-hpd} repeat the primary measurements on independent prompt suites.  Table~\ref{tab:app-wtoken} instead audits $w_k$ on 300 controlled cases with a frozen SD-1.5 reference attributor, not the final \method{} checkpoint; Appendix~\ref{app:token-details} defines its ratio and matched controls.

\makeatletter
\setlength{\@dblfptop}{0pt}
\setlength{\@dblfpbot}{0pt plus 1fil}
\makeatother
\begin{table*}[p]
\centering
\setlength{\abovecaptionskip}{2pt}
\setlength{\belowcaptionskip}{0pt}

\TopoExpandedGenEvalTable

\vspace{12pt}

\caption{Held-out PartiPrompts results (1,632 prompts; one random training seed).}
\label{tab:app-parti}
{\footnotesize
\setlength{\tabcolsep}{4.2pt}
\renewcommand{\arraystretch}{1.00}
\resizebox{\textwidth}{!}{%
\begin{tabular}{llccccc}
\toprule
Backbone & Method & PickScore $\uparrow$ & HPSv2 $\uparrow$ & ImageReward $\uparrow$ & Aesthetic $\uparrow$ & CLIP $\uparrow$\\
\midrule
\multirow{6}{*}{SD-1.5}
& Base & 21.275 & .2513 & .205 & 5.318 & 27.103\\
& Diffusion-DPO & 21.509 & .2592 & .359 & 5.328 & 27.373\\
& SFT-winners & 21.626 & .2804 & .653 & 5.354 & 27.754\\
& KTO & 21.611 & .2795 & .618 & 5.351 & 27.734\\
& InPO & \runnerup{21.785} & \runnerup{.2839} & \runnerup{.736} & \runnerup{5.355} & \runnerup{28.064}\\
& \method{} & \best{21.858} & \best{.2894} & \best{.829} & \best{5.361} & \best{28.081}\\
\midrule
\multirow{6}{*}{SDXL}
& Base & 22.375 & .2790 & .816 & 5.343 & 28.062\\
& Diffusion-DPO & \runnerup{22.635} & .2938 & \best{1.096} & \runnerup{5.358} & \best{28.655}\\
& SFT-winners & 21.974 & .2772 & .787 & 5.338 & 28.529\\
& InPO & \best{22.657} & \runnerup{.2953} & 1.027 & 5.350 & 28.498\\
& MaPO & 22.410 & .2868 & .935 & \best{5.364} & 28.190\\
& \method{} & \best{22.657} & \best{.2954} & \runnerup{1.064} & 5.357 & \runnerup{28.604}\\
\bottomrule
\end{tabular}}}

\vspace{8pt}
\begin{minipage}{.94\textwidth}
\small\textbf{Reading the extended benchmark tables.} Table~\ref{tab:app-geneval} decomposes the declared GenEval average into six shared components.  On SD-1.5, \method{} has the largest point estimate on the displayed average, Two, Count, Colors, and Position components; on SDXL, it has the largest displayed average, Single, Count, Colors, and Position components, while Diffusion-DPO remains largest on Two and Color-attr.  Table~\ref{tab:app-parti} checks a separate 1,632-prompt suite.  Within SD-1.5, \method{} has the largest point estimate on all five reported metrics.  Within SDXL, it ties for the largest PickScore and leads HPSv2, whereas Diffusion-DPO leads ImageReward and CLIP and MaPO leads Aesthetic.  These extensions preserve the within-backbone reading convention and do not create a pooled comparison across benchmarks.
\end{minipage}
\end{table*}

\begin{table*}[p]
\centering
\setlength{\abovecaptionskip}{2pt}
\setlength{\belowcaptionskip}{0pt}

\TopoExpandedCompBenchTable

\vspace{5pt}

\caption{Held-out HPDv2 results (3,200 prompts; one random training seed).}
\label{tab:app-hpd}
{\scriptsize
\setlength{\tabcolsep}{4.2pt}
\renewcommand{\arraystretch}{0.92}
\resizebox{\textwidth}{!}{%
\begin{tabular}{llccccc}
\toprule
Backbone & Method & PickScore $\uparrow$ & HPSv2 $\uparrow$ & ImageReward $\uparrow$ & Aesthetic $\uparrow$ & CLIP $\uparrow$\\
\midrule
\multirow{6}{*}{SD-1.5}
& Base & 20.825 & .2416 & .105 & 5.327 & 29.343\\
& Diffusion-DPO & 21.272 & .2551 & .345 & 5.339 & 29.703\\
& SFT-winners & 21.605 & .2865 & .770 & 5.357 & 30.253\\
& KTO & 21.526 & .2842 & .725 & 5.355 & 29.947\\
& InPO & \runnerup{21.854} & \runnerup{.2907} & \runnerup{.847} & \runnerup{5.362} & \best{30.600}\\
& \method{} & \best{21.901} & \best{.2957} & \best{.907} & \best{5.364} & \runnerup{30.515}\\
\midrule
\multirow{6}{*}{SDXL}
& Base & 22.547 & .2881 & .934 & 5.356 & 30.281\\
& Diffusion-DPO & 22.884 & .3049 & \runnerup{1.122} & \runnerup{5.363} & \runnerup{30.734}\\
& SFT-winners & 22.040 & .2877 & .872 & 5.347 & 30.434\\
& InPO & \runnerup{22.928} & \best{.3113} & 1.095 & 5.361 & 30.637\\
& MaPO & 22.617 & .2988 & 1.023 & \best{5.371} & 30.464\\
& \method{} & \best{22.941} & \runnerup{.3063} & \best{1.123} & 5.360 & \best{30.821}\\
\bottomrule
\end{tabular}}}

\vspace{4pt}
\caption{Token-modulation diagnostic with the frozen SD-1.5 reference attributor (60 cases per class).}
\label{tab:app-wtoken}
{\footnotesize
\setlength{\tabcolsep}{3.8pt}
\renewcommand{\arraystretch}{1.00}
\resizebox{\textwidth}{!}{%
\begin{tabular}{lcccccc}
\toprule
Class & Mean $R$ & Ratio of means & 95\% CI & Raw $p$ & Holm $p$ & $R>1/R<1$\\
\midrule
Color & 1.404 & 1.402 & [1.286, 1.523] & $3.4\!\times\!10^{-7}$ & $1.7\!\times\!10^{-6}$ & 48/12\\
Shape & 1.261 & 1.256 & [1.136, 1.390] & $1.1\!\times\!10^{-3}$ & $5.7\!\times\!10^{-3}$ & 39/21\\
Texture & 1.310 & 1.290 & [1.205, 1.419] & $8.9\!\times\!10^{-6}$ & $4.4\!\times\!10^{-5}$ & 43/17\\
Count & .941 & .936 & [.851, 1.032] & .140 & .140 & 23/37\\
Spatial & .918 & .906 & [.817, 1.024] & .140 & .140 & 24/36\\
\bottomrule
\end{tabular}}}

\vspace{3pt}
\begin{minipage}{.94\textwidth}
\footnotesize\textbf{Reading the held-out quality and token diagnostic.} On the 3,200-prompt HPDv2 suite in Table~\ref{tab:app-hpd}, \method{} is largest on SD-1.5 PickScore, HPSv2, ImageReward, and Aesthetic, while InPO is largest on CLIP. In the SDXL block, \method{} is largest on PickScore, ImageReward, and CLIP, whereas InPO and MaPO lead HPSv2 and Aesthetic. Table~\ref{tab:app-wtoken} is a separate frozen-reference diagnostic: color, shape, and texture ratios are above one with Holm-adjusted evidence, while count and spatial intervals include one. It characterizes the declared token pathway rather than replacing the final-image evaluations above.
\end{minipage}
\end{table*}
\makeatletter
\setlength{\@dblfptop}{0pt plus 1fil}
\setlength{\@dblfpbot}{0pt plus 1fil}
\makeatother

\begin{figure*}[p]
  \centering
  \includegraphics[width=\textwidth,height=.86\textheight,keepaspectratio]{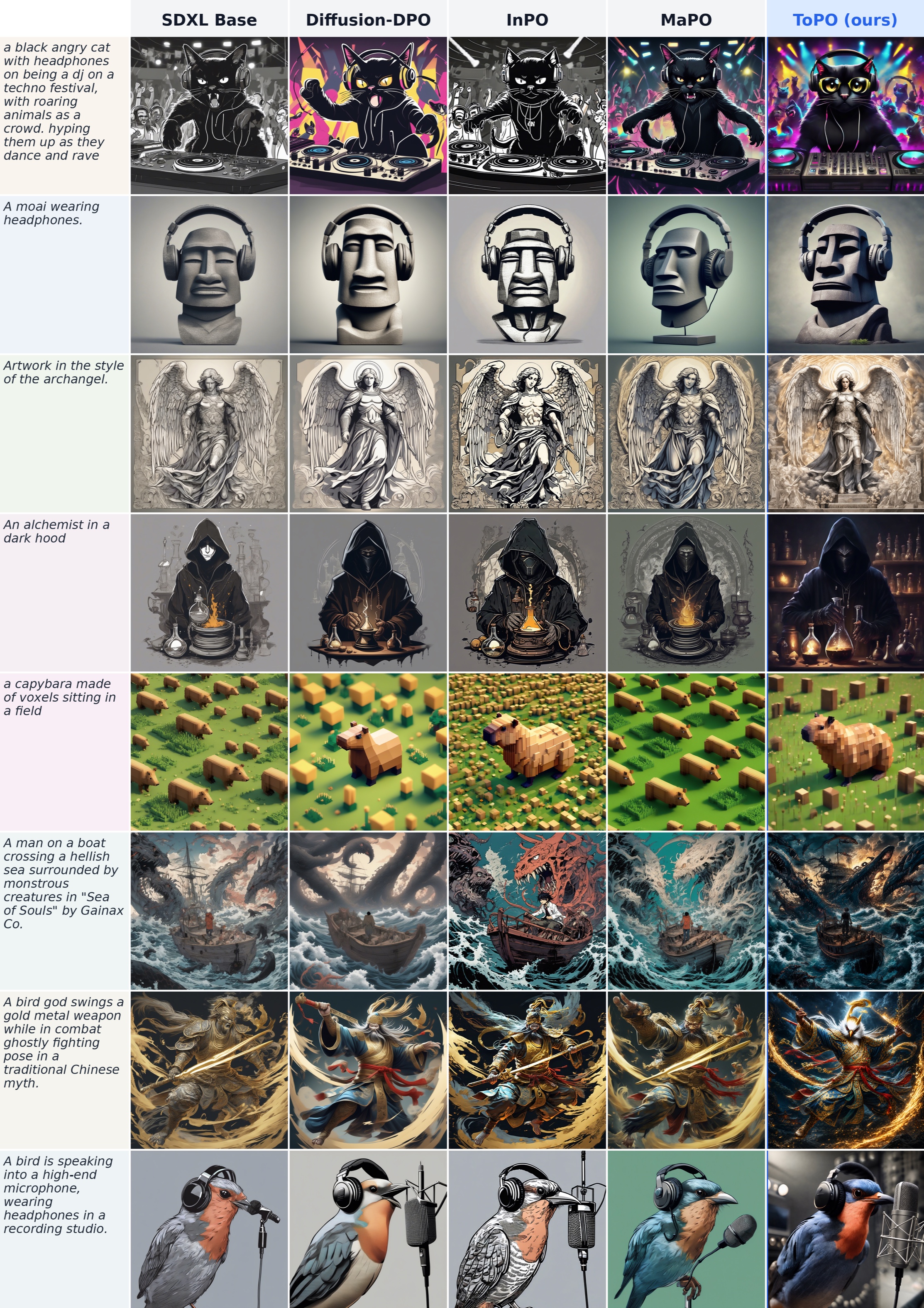}
  \caption{Additional fixed-prompt SDXL aesthetic comparisons.  Columns compare SDXL Base, Diffusion-DPO, InPO, MaPO, and \method{} on stylized subjects, visual detail, material rendering, and complex scenes.  These eight fixed cases are a qualitative complement to the preference, quality, and human-study results.}
  \label{fig:app-aesthetics}
\end{figure*}

\begin{figure*}[p]
  \centering
  \includegraphics[width=\textwidth,height=.86\textheight,keepaspectratio]{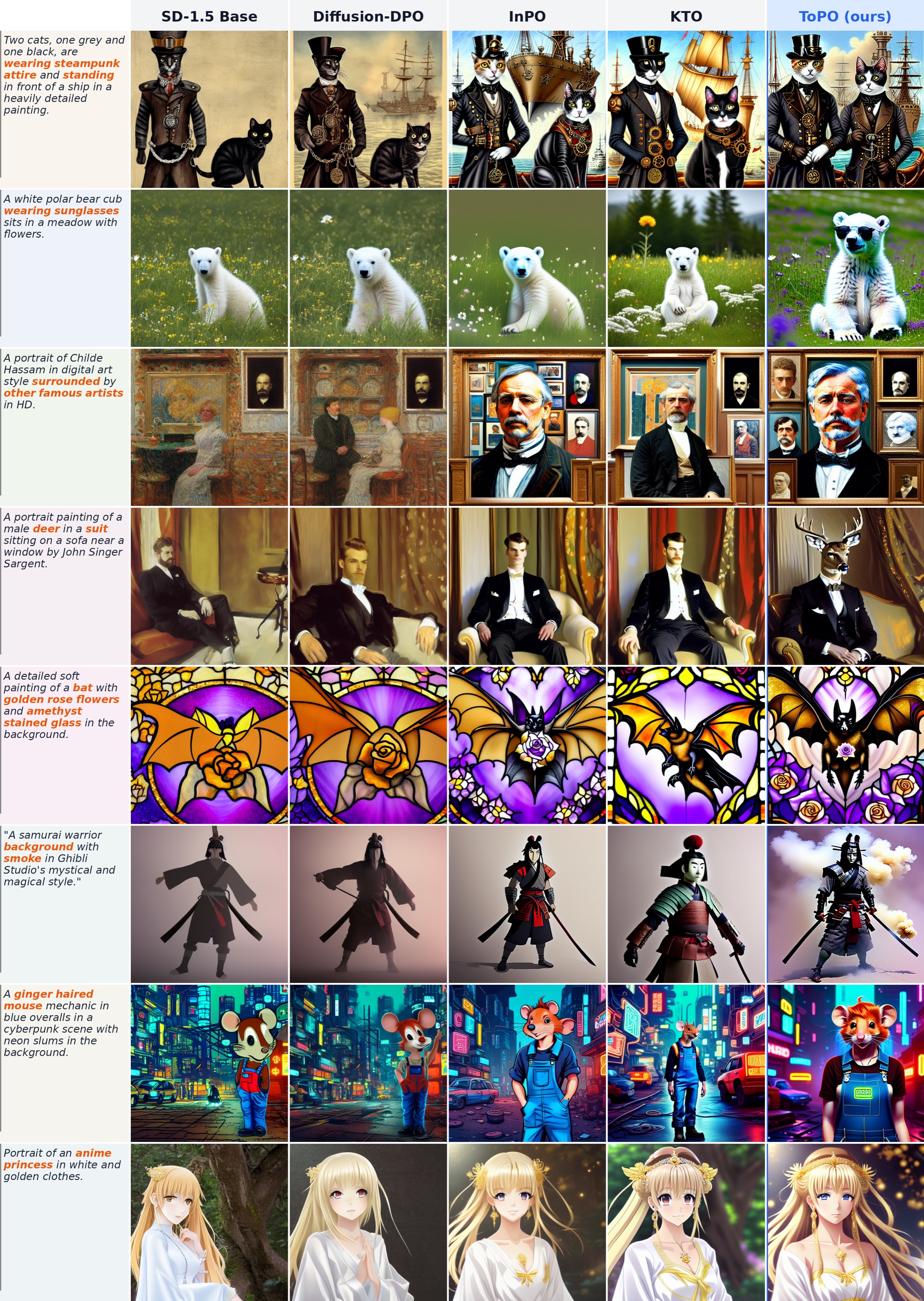}
  \caption{Additional fixed-prompt SD-1.5 visual-quality comparisons.  Columns compare SD-1.5 Base, Diffusion-DPO, InPO, KTO, and \method{} on eight prompts with detailed apparel, accessories, materials, stylized subjects, and scenes.  Each row is a same-prompt, same-backbone comparison that complements the SD-1.5 preference and quality evaluations.}
  \label{fig:app-aesthetics-sd15}
\end{figure*}

\begin{figure*}[p]
  \centering
  \includegraphics[width=\textwidth,height=.86\textheight,keepaspectratio]{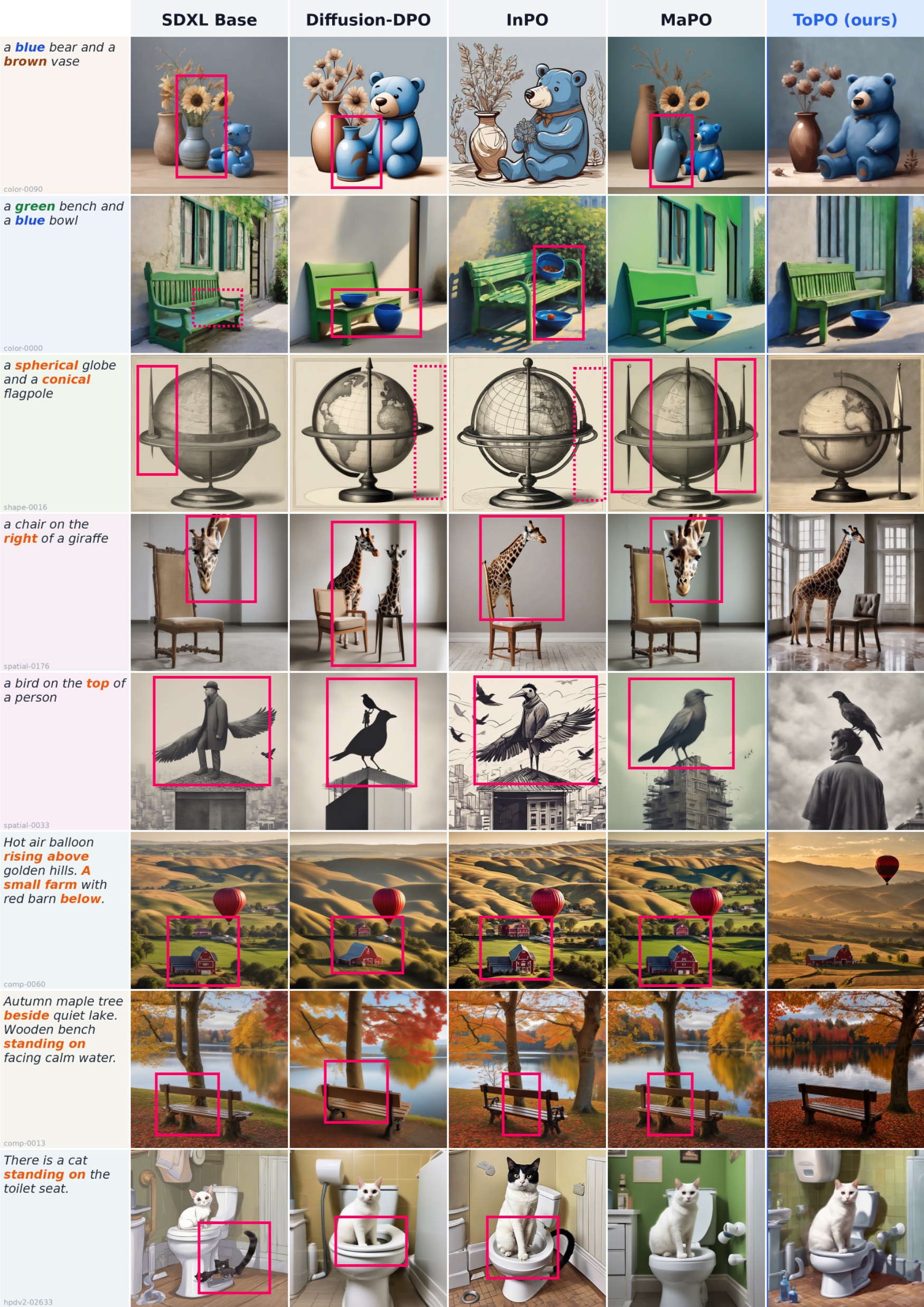}
  \caption{Additional fixed-prompt SDXL attribute and relational comparisons.  Highlighted words denote requested colors, shapes, or relations; magenta solid or dashed boxes call out locally mismatched prompt elements in comparator outputs.  Columns compare SDXL Base, Diffusion-DPO, InPO, MaPO, and \method{}.  These eight fixed cases make local composition directly inspectable alongside Tables~\ref{tab:app-compbench} and~\ref{tab:app-geneval}.}
  \label{fig:app-composition}
\end{figure*}

\clearpage

\makeatletter
\setlength{\@fptop}{0pt}
\setlength{\@fpbot}{0pt plus 1fil}
\setlength{\@dblfptop}{0pt}
\setlength{\@dblfpbot}{0pt plus 1fil}
\makeatother
\begin{figure*}[t]
  \centering
  \includegraphics[width=\textwidth]{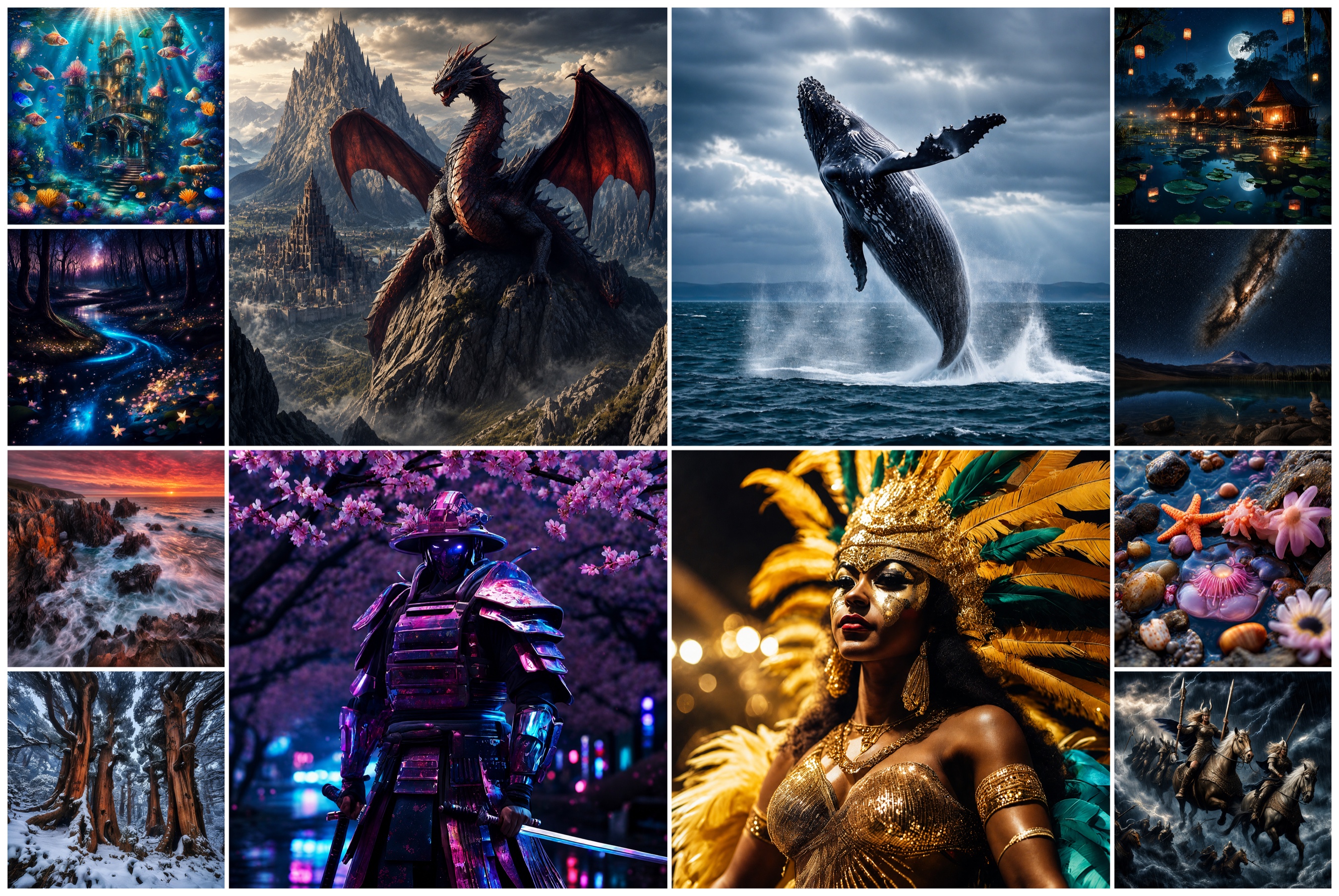}
  \caption{\textbf{Additional visual-quality samples from the final SDXL \method{} checkpoint.}  The gallery spans detailed environments, atmospheric scenes, fantasy subjects, a fashion portrait, and material-rich natural elements.  Four central $2\!\times\!2$ anchors foreground creature, character, lighting, and material detail, while the two outer columns retain eight uncropped square samples to show breadth.  The gallery extends the matched SDXL plates in Figures~\ref{fig:app-aesthetics} and~\ref{fig:app-composition} with a broader view of the final model's visual profile.}
  \label{fig:app-visual-gallery}
\end{figure*}

\vspace{2pt}
\makeatletter
\setlength{\@fptop}{0pt plus 1fil}
\setlength{\@fpbot}{0pt plus 1fil}
\setlength{\@dblfptop}{0pt plus 1fil}
\setlength{\@dblfpbot}{0pt plus 1fil}
\makeatother

\section{Token-Modulation and Scale Diagnostics}
\label{app:token-details}

The fixed-template controlled token test contains 300 cases: 60 each for color, shape, texture, count, and spatial relations.  It reports the target-token $w_k$ mass relative to matched non-target content-token controls, with class-wise paired inference and no cross-class aggregate.  Table~\ref{tab:app-wtoken} reports this diagnostic for the frozen \texttt{sft\_init\_ref\_unet} reference attributor, not for the final trained \method{} checkpoint; it therefore characterizes a fixed attribution rule rather than final-model localization.

\clearpage

\subsection{A Single-Pair View of Token-Conditioned Spatial--Temporal Routing}
\label{app:route-trace}

\begin{figure*}[t]
  \centering
  \includegraphics[width=.98\textwidth]{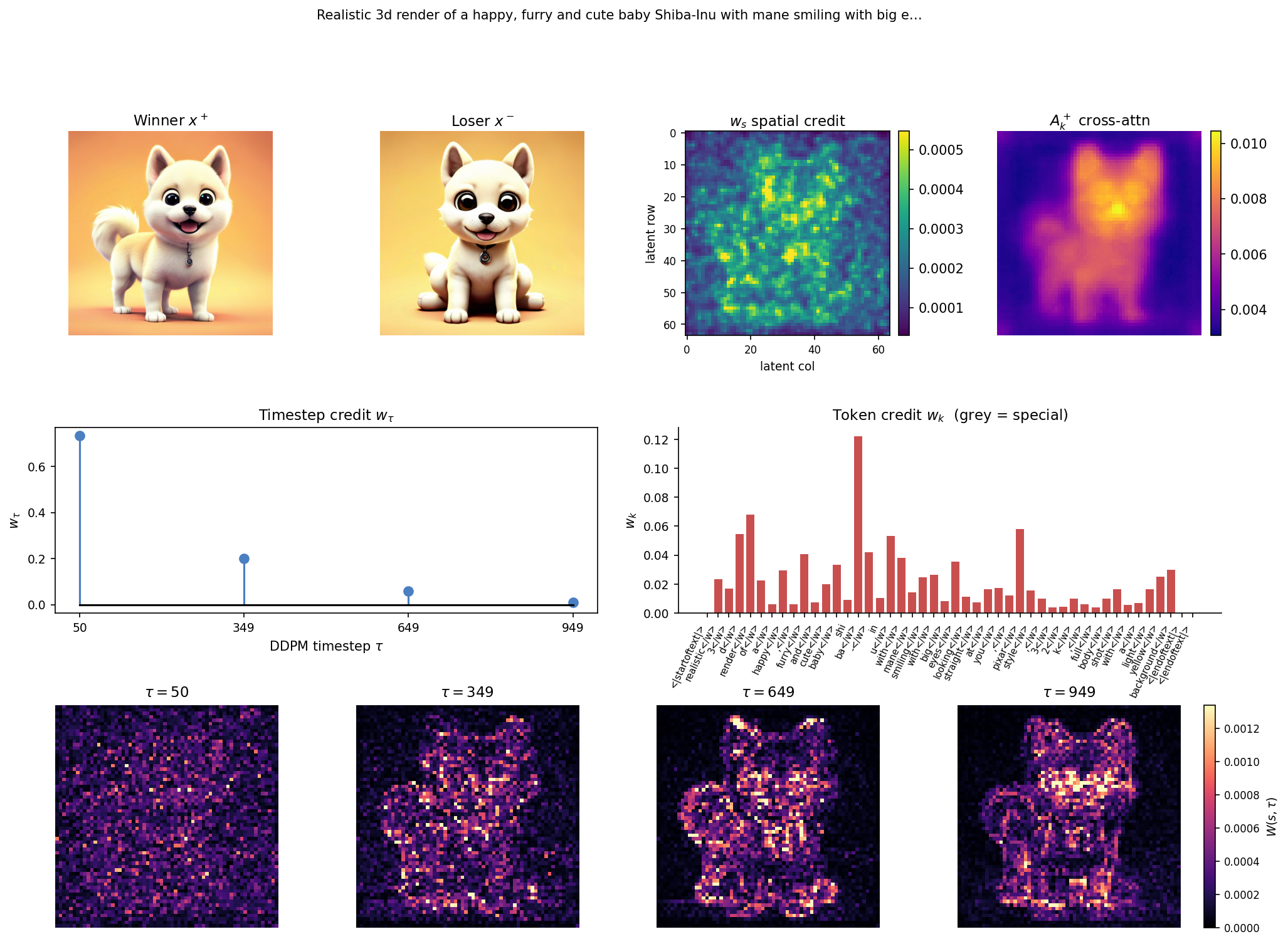}
  \caption{Token-conditioned spatial--temporal routing in \method{}.  A held-out Pick-a-Pic v2 winner/loser pair is processed by the final SD-1.5 ToPO checkpoint.  The top row shows the pair, spatial factor $w_s$, and the preferred-branch attention aggregate used in the token-to-space fold.  The middle row displays timestep factor $w_\tau$ and the content-token modulation $w_k$; the bottom row shows the composed field $W(s,\tau)$ at the four actual denoising anchors.  This is a final-checkpoint diagnostic, distinct from the frozen-reference route used during training.  The $w_s$ display is Gaussian-smoothed with $\sigma=0.7$ for readability, and the four $W$ panels share one display scale.}
  \label{fig:route-trace}
\end{figure*}

The controlled token test above characterizes a fixed reference attributor over a prompt suite.  Figure~\ref{fig:route-trace} instead opens a final-checkpoint diagnostic constructed for one held-out SD-1.5 Pick-a-Pic v2 preference pair; it is not the frozen-reference route used during training.  For this example, the spatial factor is nonuniform, the timestep factor favors the low-noise anchor, and the attention-weighted token readout folds content-token modulation back onto the native image grid.  The resulting $W(s,\tau)$ is therefore a nonuniform spatial--temporal multiplier for this pair.  This illustration makes the operational bridge inspectable, but does not establish a general localization guarantee.

\clearpage

\makeatletter
\setlength{\@fptop}{0pt}
\setlength{\@fpbot}{0pt plus 1fil}
\setlength{\@dblfptop}{0pt}
\setlength{\@dblfpbot}{0pt plus 1fil}
\makeatother
\begin{figure*}[t]
  \centering
  \includegraphics[width=\textwidth,height=.86\textheight,keepaspectratio]{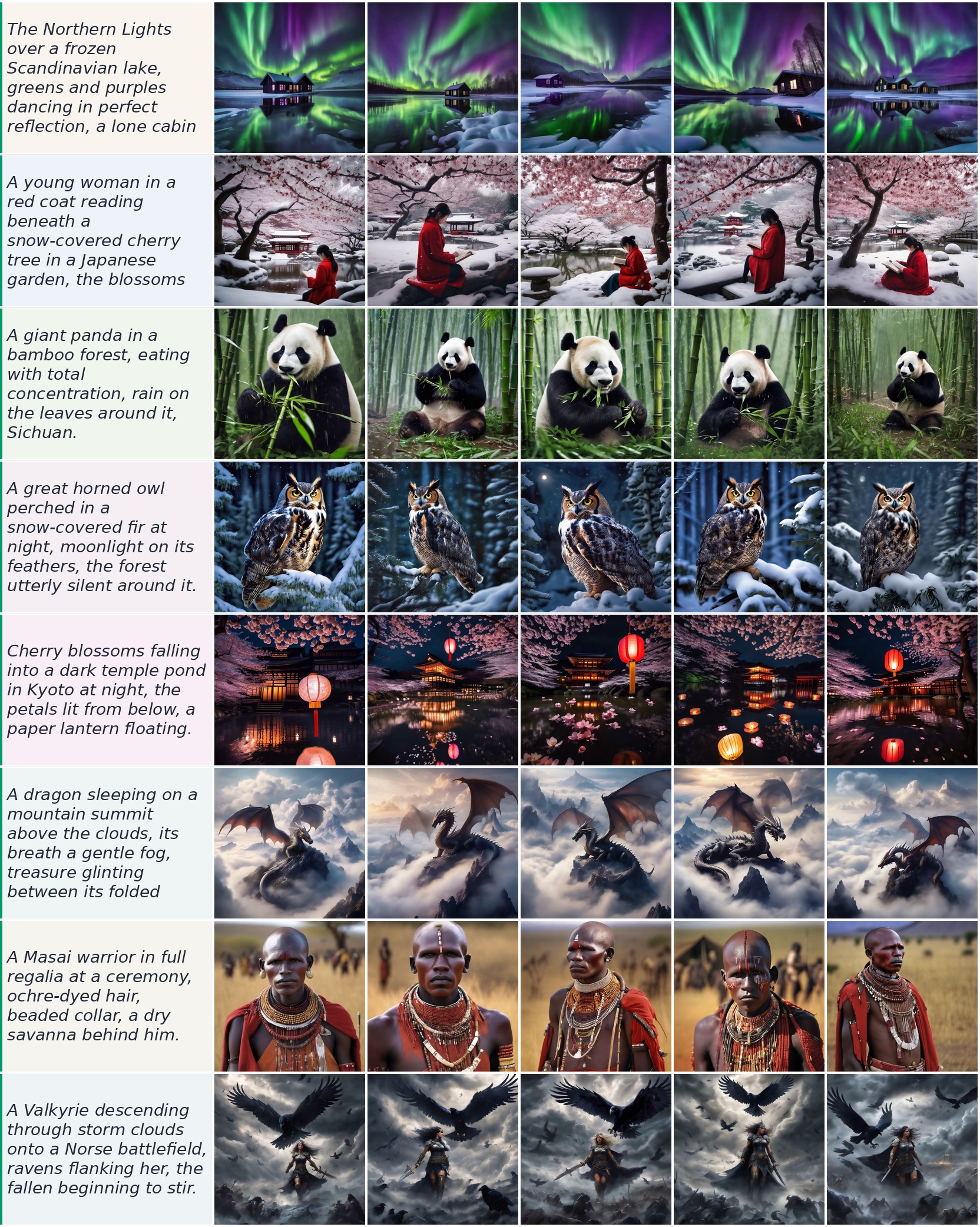}
  \caption{\textbf{Seed-wise samples from the final SDXL \method{} checkpoint.}  Every row fixes one prompt, while the five image columns use different random seeds.  The plate makes the resulting variation in composition, viewpoint, and local detail inspectable alongside the prompt elements that recur within a row, complementing the protocol-level results with a prompt-conditioned view of seed variation.}
  \label{fig:app-seed-variation}
\end{figure*}

\clearpage
\makeatletter
\setlength{\@fptop}{0pt plus 1fil}
\setlength{\@fpbot}{0pt plus 1fil}
\setlength{\@dblfptop}{0pt plus 1fil}
\setlength{\@dblfpbot}{0pt plus 1fil}
\makeatother

\section{Extended Discussion and Limitations}

\method{} is a conditional route on diffusion-native coordinates rather than an auxiliary token-level loss.  Branchwise residual contrast supplies the spatial and temporal factors, while the signed DPO logit retains the winner--loser orientation.  Preferred-branch cross-attention is the declared conditional bridge: it reads the spatial factor into the token-conditioned modulation statistic $w_k$ and folds that modulation back to the latent field.  In both backbone implementations, the frozen reference recomputes this route from every current minibatch and stop-gradient detaches it before the corresponding policy update.

The evidence is intentionally layered.  Fixed within-backbone tables summarize the outcome profile of author-released checkpoint comparisons; the matched three-seed ToPO--Diffusion-DPO study is the controlled equal-update comparison; the matched SD-1.5 Uniform-$W$/Shuffled-$W$ controls probe route allocation under the same locked configuration; the controlled trajectory follows optimization under one common seed; route-permutation and component-removal experiments map local sensitivity; and blinded human choices evaluate final images directly.  The shuffled control is an intentionally joint perturbation of coordinate correspondence and local structure.  The formal analysis separately characterizes a projected, current-batch detached-SGD surrogate and a frozen token diagnostic; it is not an AdamW convergence theorem or a proof of semantic localization.  Together, these layers connect the route construction to observed optimization behavior without conflating checkpoint comparisons, controlled perturbations, and frozen-statistic interpretation.

\paragraph{Limitations.} \method{} is currently instantiated for attention-equipped, noise-prediction latent diffusion, where a local residual field and cross-attention map are both available.  Extending the construction to flow matching or alternative conditioning interfaces requires a compatible allocation statistic and dedicated evaluation.  The matched three-seed studies estimate variation only under the locked SD-1.5/SDXL configurations; the primary checkpoint tables remain one-endpoint descriptive references and do not establish matched multi-seed effects against every external method.  The route controls show a mixed automatic-metric profile against Shuffled-$W$ and do not establish an equal-compute advantage, a semantic-localization guarantee, or an exhaustive comparison against alternative saliency constructions.  The token diagnostic evaluates a frozen SD-1.5 reference attributor and varies by concept class; it is therefore an analysis of the declared route rather than a final-checkpoint localization guarantee.

\end{document}